\documentclass{article} % For LaTeX2e
\usepackage{iclr2027_conference,times}

\iclrfinalcopy

\usepackage{amsmath,amsfonts,bm}

\def\eqref#1{equation~\ref{#1}}
\def\1{\bm{1}}

\DeclareMathAlphabet{\mathsfit}{\encodingdefault}{\sfdefault}{m}{sl}
\SetMathAlphabet{\mathsfit}{bold}{\encodingdefault}{\sfdefault}{bx}{n}

\usepackage[linesnumbered,ruled,vlined]{algorithm2e}

\newtheorem{theorem}{Theorem}
\usepackage{amssymb}
\usepackage{subfig}
\usepackage[table]{xcolor}
\usepackage{hyperref} % 确保有超链接支持
\usepackage{amsmath}
\usepackage{multirow}
\SetKwInput{Input}{Input}
\SetKwInput{Output}{Output}
\SetKwComment{Comment}{$\triangleright$\ }{}
\usepackage[utf8]{inputenc} % allow utf-8 input
\usepackage[T1]{fontenc}    % use 8-bit T1 fonts
\usepackage{hyperref}       % hyperlinks
\usepackage{url}            % simple URL typesetting
\usepackage{booktabs}       % professional-quality tables
\usepackage{amsfonts}       % blackboard math symbols
\usepackage{nicefrac}       % compact symbols for 1/2, etc.
\usepackage{microtype}      % microtypography
\usepackage{xcolor}         % colors
\usepackage{graphicx}
\usepackage{subfig}
\ifodd 1
\else

\fi

\title{Redundancy Meets Synergy: Dependency-aware Expert Selection for MoE via Submodular Optimization
}

\author{%
  Zheng Lin$^{1}$, Shaoke Fang$^{2}$,  Yuxin Zhang$^{3}$, Jinfeng Xu$^{4}$, Zihan Fang$^{5}$, Zhe Chen$^{3}$,
   \\ \textbf{Wei Ni$^{6}$}, \textbf{Jun Luo$^{7}$}, \textbf{Symeon Chatzinotas$^{1}$} \\
   $^{1}$ Interdisciplinary Centre for Security, Reliability and Trust, University of Luxembourg, Luxembourg \\
  $^{2}$ Department of Computer Science, Peking University, China\\
  $^{3}$ Institute of Space Internet, Fudan University, China \\
  $^{4}$ Department of Electrical and Computer Engineering, The University of British Columbia, Canada \\
     $^{5}$ Department of Computer Science, City University of Hong Kong, Hong Kong, China \\
  $^{6}$ School of Engineering, Edith Cowan University, Australia \\
  $^{7}$ College of Computing and Data Science, Nanyang Technological University, Singapore\\
}

\begin{document}

\maketitle

\begin{abstract}
While Mixture-of-Experts (MoE) models effectively scale model capacity through sparse activation, their deployment is often bottlenecked by prohibitive memory requirements. Extracting a compact subset of experts presents a promising solution. However, existing expert selection heuristics predominantly rely on Top-$k$ ranking, which isolates the evaluation of individual experts and ignores the intricate inter-expert dependencies introduced by the MoE gating network. In this paper, we propose DS-MoE, a theoretically grounded framework that redefines expert selection via difference-of-submodular (DS) optimization. By analyzing the second-order Taylor expansion of the loss degradation, we reveal functional duality within expert combinations: redundancy (where experts encode overlapping representations) and synergy (where experts provide complementary error cancellation). To navigate this duality, we mathematically decouple redundancy reduction from synergy maximization by formulating the selection objective as a DS function. Furthermore, we devise a tailored majorization-minimization (MM) algorithm with provable guarantees to efficiently identify a high-quality expert subset. Extensive experiments demonstrate that DS-MoE effectively preserves indispensable expert combinations, achieving superior performance compared to the state-of-the-art baselines.
\end{abstract}

\section{Introduction}
\vspace{-0.7em}
The pursuit of scaling laws in large language models (LLMs)~\cite{kaplan2020scaling,hoffmann2022training,clark2022unified} has driven a fundamental paradigm shift from dense neural networks to Mixture-of-Experts (MoE) architectures~\cite{shazeer2017outrageously,liu2024survey,fedus2022switch}. In contrast to traditional training paradigms that activate all parameters for every input, the MoE architecture is built on the principle of conditional computation. It relies on a gating mechanism to dynamically route each input token to a sparse subset of experts (e.g., Top-$k$)~\cite{fedus2022switch}. This selective execution effectively decouples model capacity from computational cost, allowing MoE architectures to scale to the trillion-parameter regime~\cite{lepikhin2020gshard} and unlock massive knowledge capacity without a proportional increase in computing overhead~\cite{li2025uni,liu2024deepseek}.

Despite its significance, the theoretical understanding of expert contributions to model performance remains in its infancy. Most prior works evaluate experts in strict isolation~\cite{zhou2022mixture,muzio2024seer,baidiep}, assigning importance scores independently and thus ignoring intricate inter-expert dependencies~\cite{chi2022representation}, as detailed in Appendix~\ref{sec:related_work}. Theoretically, this strict independence assumption equates to approximating the loss landscape's Hessian matrix as purely diagonal.  Consequently, prior works fail to distinguish between functional redundancy (where experts encode overlapping representations) and synergy (where specific expert combinations provide complementary error cancellation). This oversight inevitably leads to sub-optimal expert selection: redundant experts monopolize capacity, while combinatorially indispensable but individually negligible synergistic components are discarded. Therefore, expert selection must transcend heuristic scalar sorting and be formulated as a rigorous optimization problem. However, transitioning from isolated scoring to collective evaluation inherently introduces an intractable combinatorial optimization problem. Constrained by the immense discrete space, prior works inevitably default to simplistic and retreat to independent scoring heuristics~\cite{zhou2022mixture,puigcerver2024sparse,lin2024scattermoe}. Thus, there is a critical need for a framework that can systematically optimize inter-expert synergy and mitigate redundancy without succumbing to combinatorial explosion.

\begin{figure}[t!]
  \centering
  % 建议使用 pdf 格式的矢量图，以保证在任何缩放倍数下都清晰
  \includegraphics[width=0.90\textwidth]{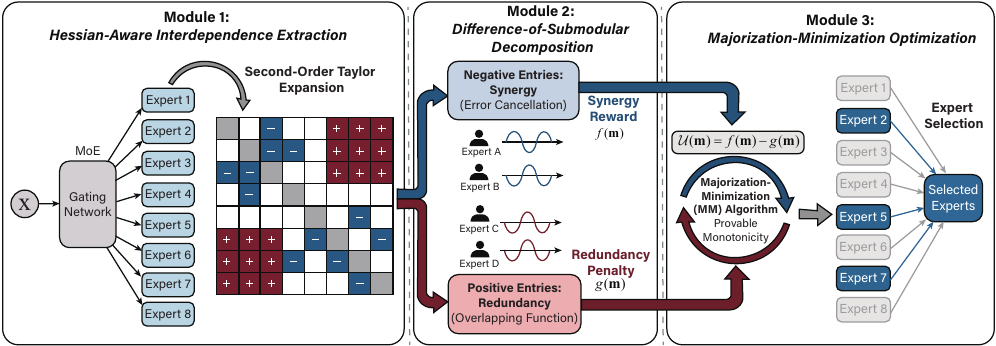}
  \caption{{Overview of the DS-MoE framework.} 
  }
  \label{fig:ds_moe_framework}
  \vspace{-1em}
\end{figure}

To bridge this gap, we propose DS-MoE, a theoretically grounded MoE framework that redefines expert selection via difference-of-submodular (DS) optimization~\cite{iyer2012algorithms}.
As depicted in Figure~\ref{fig:ds_moe_framework}, DS-MoE anchors its selection criterion in the second-order Taylor expansion of the loss degradation, effectively bridging the gap between continuous loss curvature and discrete expert selection~\cite{hassibi1992second}. This formulation reveals that the MoE Hessian encodes expert interdependence, manifesting fundamental duality: positive off-diagonal entries correspond to redundancy (penalizing overlapping functionality), while negative entries capture synergy (rewarding complementary error cancellation). To harness this duality, we decouple redundancy reduction from synergy maximization by formulating the selection objective as a DS optimization problem. Subsequently, we devise a tailored majorization-minimization (MM) algorithm~\cite{hunter2004tutorial,mairal2013stochastic} with guaranteed monotonic convergence to efficiently identify a high-quality expert combination.

Our contributions are summarized as follows: i) We derive a second-order characterization of inter-expert interdependence, where the off-diagonal entries of the MoE Hessian quantify the inter-expert functional redundancy and synergy; ii) We cast expert selection as a DS problem, thereby decomposing the collective impact of expert combinations into a controllable trade-off between penalizing redundancy and rewarding synergy; iii) We develop a tailored MM algorithm with guaranteed monotonic convergence to efficiently identify a high-quality expert subset; iv) Extensive experiments demonstrate that DS-MoE consistently outperforms state-of-the-art baselines, and strikes a superior accuracy-efficiency trade-off.

\vspace{-0.2em}
\section{Methodology} 
\vspace{-0.5em}

\subsection{Preliminaries \& Problem Formulation}\label{ssec:pro_formulated}
\vspace{-0.5em}

{\bf{Standard MoE mechanism.}} We begin by formalizing the MoE architecture~\citep{jacobs1991adaptive}. An MoE layer comprises a set of $N$ expert networks, denoted as $E = \{E_1, \dots, E_N\}$. For an input token $x \in \mathbb{R}^d$, a gating network maps $x$ to a sparse routing vector $G(x) \in \mathbb{R}^N$ via a learnable projection matrix $W_g \in \mathbb{R}^{N \times d}$, selecting the subset of Top-$K$ experts with the highest response values. The gating weights are derived by normalizing the projected logits over the selected subset:
\begin{equation}
G(x) = \operatorname{Softmax}(W_g x; \mathcal{T}(x)),
\end{equation}
where $\mathcal{T}(x) = \left\{ i \;\middle|\; \sum_{j=1}^N \mathbb{I}([W_g x]_j > [W_g x]_i) < K \right\}$ represents the set of selected expert indices, $\mathbb{I}(\cdot)$ denotes the indicator function, and $\operatorname{Softmax}(\cdot; \mathcal{T}(x))$ applies exponential normalization strictly to indices $i \in \mathcal{T}(x)$ while zeroing out others. The layer output $y$ is computed as the weighted sum of the active experts:
\begin{equation}
    y = \sum_{i \in \mathcal{T}(x)} G_{i}(x) \cdot E_{i}(x),
\end{equation}
where $G_i(x)$ is the $i$-th element of $G(x)$ that represents the routing score (i.e., routing probability) assigned to expert $E_i$, and $E_i(x)$ denotes the output representation generated by the expert network.

{\bf{Limitations of Isolated Expert Scoring.}} To retain a compact subset of experts, existing methods typically rank experts by isolated scores, such as routing frequency or gradient-based importance~\citep{muzio2024seer,baidiep}, and keep the Top-$k$ ones. This relies on a limiting independence assumption: the utility of an expert is evaluated without considering which other experts are retained. This greedy heuristic introduces a critical blind spot regarding inter-expert dependencies~\citep{chi2022representation}. Specifically, it fails to distinguish between functional redundancy and synergy. Consequently, it risks discarding experts that appear individually negligible but are combinatorially indispensable. By reducing expert selection to a series of independent scalar decisions rather than a combinatorial optimization, existing methods fail to exploit the complementary interactions required to drive loss reduction beyond the capacity of isolated experts.

{\bf{Formalizing Expert Selection.}}
To address these limitations and transition from heuristic scoring to a rigorous optimization framework, we introduce a binary selection vector $\mathbf{m} \in \{0,1\}^N$ for each MoE layer. Here, $m_i=1$ indicates that expert $E_i$ is retained, and $m_i=0$ indicates that it is removed. The selection is shared by all inputs, so removed experts need not be stored. Keeping the gating weights of the original model fixed, the output of the MoE layer is reformulated as:
\begin{equation}\label{eq:y}
    y(x; \mathbf{m}) = \sum_{i=1}^N m_i \cdot G_i(x) \cdot E_i(x),
\end{equation}
where $\mathbf{m}=\mathbf{1}$ recovers the full model with all $N$ experts retained.

We frame expert selection as a minimization problem. Our objective is to identify the sparse selection vector $\mathbf{m}$ that minimizes the expected loss degradation relative to the full model, subject to a cardinality constraint dictated by storage limitations (i.e., the model can accommodate $k$ experts, $\mathbf{1}^\top\mathbf{m} = k$):
\begin{equation}
\label{eq:problem}
    \min_{\mathbf{m} \in \{0, 1\}^N} \quad \underbrace{\mathbb{E}_{x \sim \mathcal{D}} \left[ \mathcal{L}(y(x; \mathbf{m})) - \mathcal{L}(y(x; \mathbf{1})) \right]}_{\text{Loss Degradation } \Delta \mathcal{L}} \quad \text{s.t.} \quad \mathbf{1}^\top\mathbf{m} = k.
\end{equation}
After selection, the retained experts and the router are fine-tuned on downstream tasks (Section~\ref{sec:experiments}).

{\bf{Optimization Challenges.}} Directly solving Eqn.~\ref{eq:problem} presents formidable obstacles. The cardinality constraint $\mathbf{1}^\top\mathbf{m} = k$ induces a combinatorial search space of size $\binom{N}{k}$, rendering the problem NP-hard~\cite{hartmanis1982computers}. In practice, brute-force forward propagation is computationally intractable due to the prohibitive latency of repeatedly retrieving expert parameters. To overcome these bottlenecks, we derive an efficient analytical proxy to quantify the collective importance of expert combinations by analyzing the second-order Taylor expansion~\citep{lecun1989optimal,dong2017learning} of the loss degradation in the following section.

\subsection{Hessian-Aware Interdependence Extraction}
\vspace{-0.1em}

{\bf{Second-Order Taylor Expansion of Loss Degradation.}} To circumvent the computational intractability of combinatorial search in Eqn.~\ref{eq:problem}, we reformulate the expert selection problem by analyzing the model's behavior around the full model ($\mathbf{m}=\mathbf{1}$). Selecting a sparse subset of experts can be modeled as a finite deviation from this reference point. By taking the second-order Taylor expansion of the loss degradation around $\mathbf{m}=\mathbf{1}$, we have:
\begin{equation}\label{eqn:loss_expansion}
\Delta\mathcal{L}(\mathbf{m}) = \mathcal{L}(\mathbf{m})-\mathcal{L}(\mathbf{1}) \approx \underbrace{(\mathbf{m}-\mathbf{1})^{\top} \left. \nabla_{\mathbf{m}}\mathcal{L} \right|_{\mathbf{m}=\mathbf{1}}}_{\text{First-order Term}} + \underbrace{\frac{1}{2}(\mathbf{m}-\mathbf{1})^{\top} \left. \mathbf{H}_{\mathbf{m}} \right|_{\mathbf{m}=\mathbf{1}} (\mathbf{m}-\mathbf{1})}_{\text{Second-order Term}},
\end{equation}
where $\nabla_{\mathbf{m}}\mathcal{L}$ and $\mathbf{H}_{\mathbf{m}} = \nabla^2_{\mathbf{m}}\mathcal{L}$ are the gradient vector and Hessian matrix of the loss function with respect to $\mathbf{m}$, respectively.

\begin{figure}[t!]
  \centering
  \subfloat[Gradients (First-Order)]{
    \includegraphics[width=0.256\textwidth,height=0.228\textwidth]{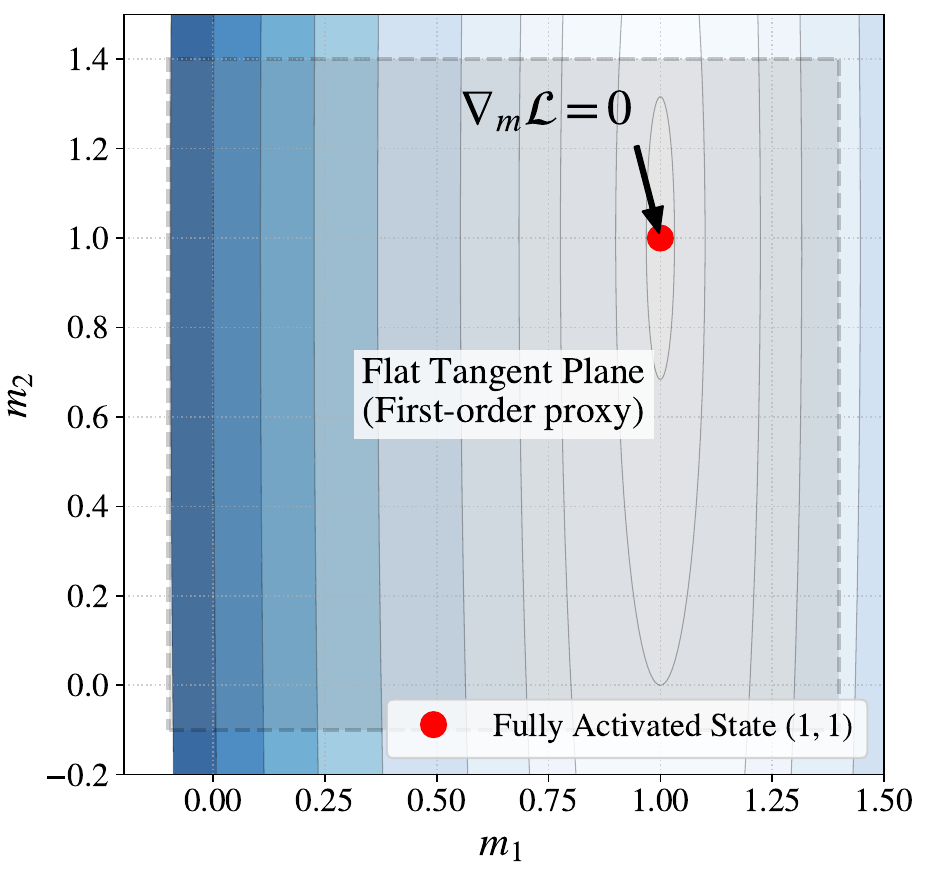}
    \label{fig:moe_2d_contour_a}
  }
  \hspace{2em}
  \subfloat[Hessian (Second-Order)]{
    \includegraphics[width=0.30\textwidth]{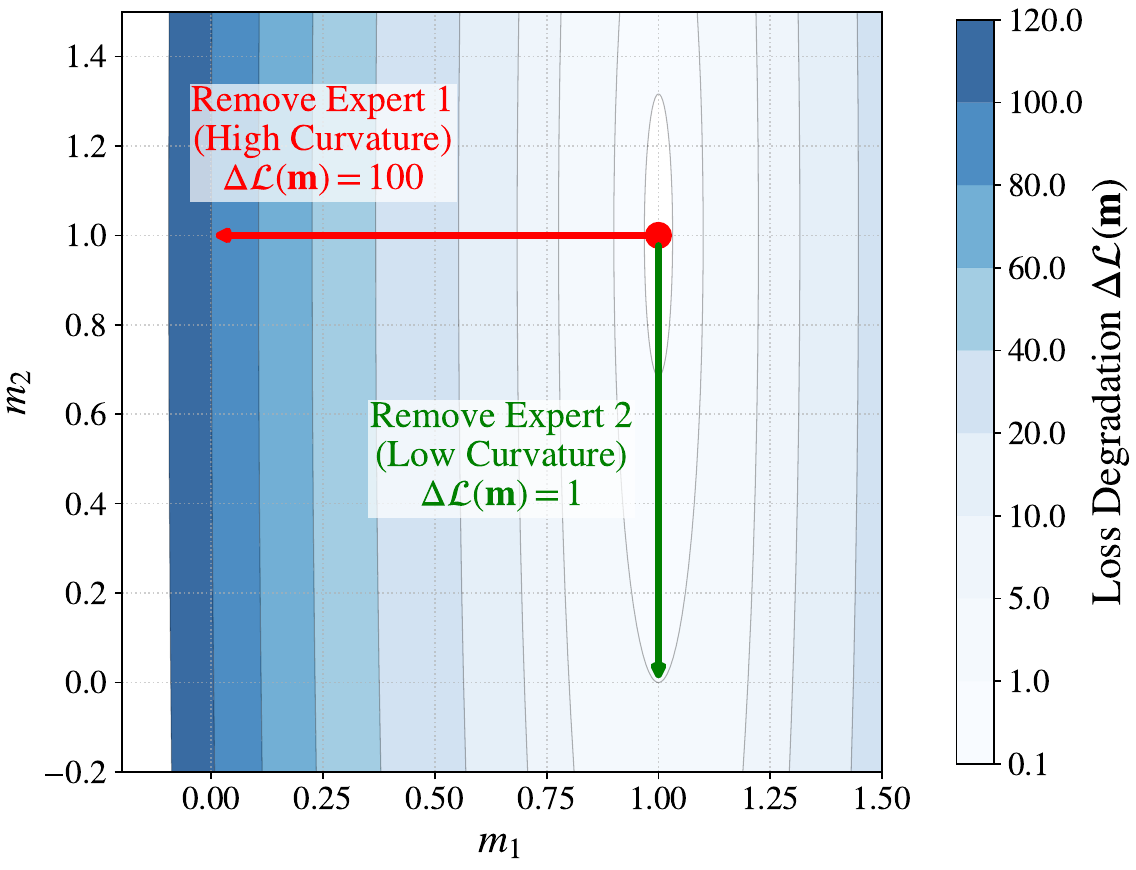}
    \label{fig:moe_2d_contour_b}
  }
  \caption{{ First-order gradients versus second-order Hessian. (a) First-order proxy fits a flat tangent plane at a stationary point, failing to distinguish between experts. (b) Hessian captures steep curvature, revealing the massive penalty of removing expert 1 ($\Delta\mathcal{L}(\mathbf{m}) = 100$) versus expert 2 ($\Delta\mathcal{L}(\mathbf{m}) = 1$).}}
  \label{fig:moe_2d_contour}
  \vspace{-0.7em}
\end{figure}

{\bf{Necessity of Second-Order Curvature.}} While Eqn.~\ref{eqn:loss_expansion} incorporates the first-order gradient term, relying solely on this term is insufficient for expert selection. The first-order term scores each expert separately and thus cannot capture how removing one expert changes the importance of another, which is encoded only by the off-diagonal entries of $\mathbf{H}_{\mathbf{m}}$. Moreover, the first-order term can be uninformative even for individual experts: it fits a flat tangent plane to the loss landscape, failing to capture the steep curvature penalties associated with dropping critical experts. To illustrate this limitation, we consider a simplified MoE landscape $\Delta\mathcal{L}(\mathbf{m}) = 100(m_1 - 1)^2 + (m_2 - 1)^2$, as shown in Figure~\ref{fig:moe_2d_contour}. At the full model $(1, 1)$, the gradient vanishes ($\nabla_{\mathbf{m}} \mathcal{L} = \mathbf{0}$), rendering the first-order term completely uninformative and incorrectly implying that both experts are equally expendable. In contrast, directly evaluating the impact of expert removal reveals a significant disparity: removing expert 1 (high curvature) incurs a substantial loss penalty of $100(0-1)^2 = 100$, whereas removing expert 2 (low curvature) yields a marginal penalty of only $1$. Since the Hessian matrix $\mathbf{H}_{\mathbf{m}}$ explicitly encapsulates both this curvature distinction and the inter-expert coupling, retaining the second-order term in Eqn.~\ref{eqn:loss_expansion} is imperative for establishing a robust selection criterion.

{\bf{Hessian Matrix Derivation.}} Directly computing the exact high-dimensional Hessian matrix $\mathbf{H}_{\mathbf{m}}$ is computationally prohibitive for LLM MoE architectures, as calculating second-order derivatives requires double back-propagation through massive expert networks. To circumvent this bottleneck, we approximate $\mathbf{H}_{\mathbf{m}}$ by the empirical Fisher information matrix~\citep{martens2020new,singh2020woodfisher}, which estimates local curvature using only per-sample first-order gradients:
\begin{equation}\label{eqn:appro_H_m}
  [\mathbf{H}_{\bf m}]_{ij} \approx \mathbb{E}_{x \sim \mathcal{D}}\left[ \frac{\partial\mathcal{L}}{\partial m_i}\cdot\frac{\partial\mathcal{L}}{\partial m_j} \right].
\end{equation}

Recalling the MoE output in Eqn.~\ref{eq:y}, the gradient with respect to the binary state $m_i$ is given by $\frac{\partial \mathcal{L}}{\partial m_i} = \left( \frac{\partial y}{\partial m_i} \right)^\top \nabla_y \mathcal{L} = G_i(x) (E_i(x)^\top \nabla_y \mathcal{L})$. We substitute this into Eqn.~\ref{eqn:appro_H_m}, yielding the following analytical decomposition:
\begin{equation}\label{eqn:appro_H_m_2}
[\mathbf{H}_{\bf m}]_{ij} \approx \mathbb{E}_{x}\left[ \underbrace{G_i(x)G_j(x)}_{\text{Routing Co-activation}} \cdot \underbrace{(E_i(x)^\top\nabla_y\mathcal{L})(E_j(x)^\top\nabla_y\mathcal{L})}_{\text{Gradient Alignment}} \right].
\end{equation}

This formulation decouples the expert interaction into topological and geometric components. The first term on the right-hand side, $G_i(x)G_j(x)$, acts as a topological gate that is non-zero only when experts $i$ and $j$ are co-activated for the same input $x$, thereby capturing sparse routing co-occurrence. The second term captures geometric correlations, where $E_i(x)^\top\nabla_y\mathcal{L}$ represents the functional contribution of expert $i$, quantifying the projection of its output onto the gradient to indicate its effectiveness in reducing the training loss. Consequently, when two co-activated experts exhibit identically signed projections (i.e., driving the loss descent along the consistent optimization trajectory), it implies functional redundancy and results in a positive curvature penalty ($[\mathbf{H}_{\bf m}]_{ij}>0$). Conversely, oppositely signed projections imply complementary error cancellation where one expert provides reverse compensation for the other, yielding a negative curvature reward ($[\mathbf{H}_{\bf m}]_{ij}<0$) that signifies functional synergy.

\subsection{Difference-of-Submodular Decomposition}\label{sec:np_hard_main}

{\bf{Problem Reformulation and Utility Function.}} With the analytical form of $[\mathbf{H}_{\bf m}]_{ij}$ established, we substitute it into the loss degradation approximation to construct a tractable optimization objective. Our goal is to find the binary selection vector ${\bf m}\in \{0,1\}^N$ that minimizes the loss degradation, which is mathematically equivalent to maximizing the retained utility $\mathcal{U}({\bf m})=-\Delta\mathcal{L}(\mathbf{m})$. As derived in Appendix~\ref{sec:utility_derivation}, by expanding the quadratic term and discarding the constant term independent of $\bf{m}$, we reformulate the maximization of the retained utility as follows:
\begin{equation}\label{problem:utiliy}
   \max_{{\bf m} \in \{0,1\}^N} \mathcal{U}({\bf m}) = \sum_{i=1}^N m_i d_i - \frac{1}{2} \sum_{i=1}^N \sum_{j=1}^N m_i m_j [\mathbf{H}_{\bf m}|_{\mathbf{m}=\mathbf{1}}]_{ij}, \quad \text{s.t.} \quad \mathbf{1}^\top\mathbf{m} = k,
\end{equation}
where $d_i = {\sum_{j=1}^N [\mathbf{H}_{\bf m}|_{\mathbf{m}=\mathbf{1}}]_{ij}} - {[\nabla_{\bf m}\mathcal{L}|_{\mathbf{m}=\mathbf{1}}]_i}$ represents the individual utility of expert $E_i$. Here (the evaluation point $|_{\mathbf{m}=\mathbf{1}}$ is omitted in the text for brevity), $- {[\nabla_{\bf m}\mathcal{L}]_i}$ captures the standalone contribution of $E_i$ toward minimizing the loss, while ${\sum_{j=1}^N [\mathbf{H}_{\bf m}]_{ij}}$ aggregates the pairwise coupling effects between $E_i$ and all other experts. The quadratic term $- \frac{1}{2} \sum_{i=1}^N \sum_{j=1}^N m_i m_j [\mathbf{H}_{\bf m}]_{ij}$ explicitly governs the combinatorial interaction among the selected experts. Eqn.~\ref{problem:utiliy} casts the original loss minimization as a cardinality-constrained quadratic binary optimization (QBO) problem, thereby decoupling the individual capability of each expert from their inter-expert dependencies.

{\bf{Difference-of-Submodular Decomposition.}} While the cardinality-constrained QBO problem in~\eqref{problem:utiliy} provides an explicit structural interpretation, directly maximizing the objective $\mathcal{U}(\mathbf{m})$ is computationally intractable, as proved to be an NP-hard problem in Appendix~\ref{sec:np_hard}. This bottleneck stems from the mixed signs of the off-diagonal entries of $\mathbf{H}_{\bf m}$, where positive entries indicate redundancy and negative entries indicate synergy. This mixture of conflicting combinatorial forces renders the objective landscape highly non-convex and non-submodular, preventing the use of standard greedy or convex optimization solvers.

To overcome this bottleneck, we isolate these conflicting coupling effects by decomposing the Hessian $\mathbf{H}_{\bf m}$ into its non-negative components. We define $\mathbf{H}_{\bf m}^+$ and $\mathbf{H}_{\bf m}^-$ such that $[\mathbf{H}_{\bf m}^+]_{ij} = \max(0, [\mathbf{H}_{\bf m}]_{ij})$ and $[\mathbf{H}_{\bf m}^-]_{ij} = \max(0, -[\mathbf{H}_{\bf m}]_{ij})$. By substituting $\mathbf{H}_{\bf m} = \mathbf{H}_{\bf m}^+ - \mathbf{H}_{\bf m}^-$ into the utility function $\mathcal{U}(\mathbf{m})$, we reformulate the objective as the difference between a gain function $f(\mathbf{m})$ and a cost function $g(\mathbf{m})$:
\begin{equation}\label{eq:transformed_problem_eqn}
\small
\max_{\mathbf{m} \in \{0,1\}^N} \!\!\!\!\!\!\!\!\quad \mathcal{U}(\mathbf{m})\! = \!\!\underbrace{\left( \!\sum_{i=1}^N m_i d_i\! +\! \frac{1}{2} \!\sum_{i=1}^N \!\sum_{j=1}^N m_i m_j [\mathbf{H}_{\bf m}^-|_{\mathbf{m}=\mathbf{1}}]_{ij} \!\!\right)}_{f(\mathbf{m})} \!\!-\!\! \underbrace{\left( \!\frac{1}{2}\! \sum_{i=1}^N \sum_{j=1}^N m_i m_j [\mathbf{H}_{\bf m}^+|_{\mathbf{m}=\mathbf{1}}]_{ij} \right)}_{g(\mathbf{m})}\!, \quad \!\!\!\!\text{s.t.} \quad \!\!\!\! \mathbf{1}^\top\mathbf{m} = k,
\end{equation}
where the gain function $f(\mathbf{m})$ represents the cumulative positive utility, incorporating individual expert capability via $d_i$ and functional synergy arising from complementary interactions captured by $\mathbf{H}_{\bf m}^-$; the cost function $g(\mathbf{m})$ quantifies the functional redundancy, acting as a penalty term to discourage the co-selection of overlapping experts via $\mathbf{H}_{\bf m}^+$. The non-negativity of the decomposed components $\mathbf{H}_{\mathbf{m}}^+$ and $\mathbf{H}_{\mathbf{m}}^-$ imparts a mathematical property: both the gain function $f(\mathbf{m})$ and the cost function $g(\mathbf{m})$ are supermodular, and $g(\mathbf{m})$ is additionally monotonically non-decreasing (proved in Appendix~\ref{sec:supermodular}). This guarantee confirms that our objective $\mathcal{U}(\mathbf{m})$ is the difference of two supermodular functions. Since the negation of a supermodular function is submodular, the objective $f(\mathbf{m}) - g(\mathbf{m})$ is algebraically equivalent to $(-g(\mathbf{m})) - (-f(\mathbf{m}))$. This fundamental equivalence converts the original mixed-sign quadratic maximization into a canonical DS optimization problem, which admits efficient MM-type algorithms~\citep{iyer2012algorithms}.

\vspace{-0.4em}
\subsection{Majorization-Minimization Optimization}
\vspace{-0.3em}
{\bf{MM Algorithm.}} To efficiently solve the maximization problem in~\eqref{eq:transformed_problem_eqn}, we employ an MM-style algorithm. The core principle is to iteratively maximize a tractable modular surrogate function $\mathcal{Q}({\bf m}; {\bf m}')$ that coincides with the objective $\mathcal{U}({\bf m})$ at the given selection vector ${\bf m}'$. Following the modular approximation strategy for DS optimization~\citep{iyer2012algorithms}, we construct this surrogate from a modular approximation of the gain function $f(\mathbf{m})$ and a modular upper bound of the cost function $g(\mathbf{m})$.

{\textbf{Linearization of Gain Function.}} We approximate the gain function $f({\bf m})$ by its first-order Taylor expansion at ${\bf m}'$:
\begin{equation}\label{f_m_eq_1}
    f({\bf m}) \approx f({\bf m}') + \sum_{i=1}^N [\nabla_{\bf m} f({\bf m}')]_i ({m}_i - {m}_i') \triangleq L_f({\bf m}; {\bf m}'),
\end{equation}
where the gradient $[\nabla_{\bf m} f({\bf m}')]_i = d_i + \sum_{j=1}^N [{\bf H}_{{\bf m}}^{-}|_{\mathbf{m}=\mathbf{1}}]_{ij} m_j'$ captures the synergy gain of expert $E_i$ relative to the selection vector ${\bf m}'$. Since $f({\bf m})$ is quadratic, the approximation error is exactly $\frac{1}{2}({\bf m}-{\bf m}')^\top {\bf H}_{{\bf m}}^{-}|_{\mathbf{m}=\mathbf{1}} ({\bf m}-{\bf m}')$, which vanishes at ${\bf m}={\bf m}'$ but may be negative elsewhere because ${\bf H}_{{\bf m}}^{-}$ is not necessarily positive semidefinite.

{\textbf{Upper Bound for Cost Function.}} To derive an upper bound $U_g({\bf m}; {\bf m}')$ for the cost function $g({\bf m})$, we leverage the arithmetic equality for binary variables ($m_i m_j = \frac{1}{2}(m_i^2+m_j^2) - \frac{1}{2}(m_i-m_j)^2 = \frac{1}{2}(m_i+m_j) - \frac{1}{2}(m_i-m_j)^2$). This reformulates the cost function $g({\bf m})$ into a linear component and a concave residual. By applying the first-order Taylor expansion to the concave residual around ${\bf m}'$, we obtain a tight upper bound (proved in Appendix~\ref{sec:cost_upper}):
\begin{equation}\label{g_m_eq_1}
    g(\mathbf{m}) \le \sum_{i=1}^N m_i \left[ \sum_{j=1}^N [\mathbf{H}_{\mathbf{m}}^{+}|_{\mathbf{m}=\mathbf{1}}]_{ij} \left( \frac{1}{2} - m_i' + m_j' \right) \right] + C_g' \triangleq U_g(\mathbf{m}; \mathbf{m}'),  
\end{equation}
where $C_g' = \frac{1}{4}\sum_{i=1}^N\sum_{j=1}^N (m_i' - m_j')^2 [\mathbf{H}_{\mathbf{m}}^+|_{\mathbf{m}=\mathbf{1}}]_{ij}$ is a state-dependent constant independent of the selection vector $\mathbf{m}$, and $\sum_{j=1}^N [\mathbf{H}_{\mathbf{m}}^{+}|_{\mathbf{m}=\mathbf{1}}]_{ij} \left( \frac{1}{2} - m_i' + m_j' \right)$ gives the context-aware redundancy penalty representing the potential functional overlap of expert $E_i$ relative to the entire selected expert subset, which heavily penalizes an expert for overlapping with currently active experts, while forgiving overlaps with inactive ones.

By synthesizing the above two functions, we construct the tractable modular surrogate $\mathcal{Q}({\bf m}; {\bf m}')$ for $\mathcal{U}({\bf m})$ as follows:
\begin{equation}\label{eqn:surro_func}
\mathcal{Q}({\bf m}; {\bf m}') \triangleq L_f({\bf m}; {\bf m}') - U_g({\bf m}; \mathbf{m}'), \quad \text{with} \quad \mathcal{Q}({\bf m}'; {\bf m}') = \mathcal{U}({\bf m}').
\end{equation}
Based on the surrogate function $\mathcal{Q}({\bf m}; {\bf m}')$ in Eqn.~\ref{eqn:surro_func}, we establish the iterative optimization procedure. Let ${\bf m}^{(t)}$ denote the expert selection vector at the $t$-th iteration. By setting the reference point ${\bf m}' = {\bf m}^{(t)}$ and discarding constant terms independent of $\bf m$, the optimization problem at the $(t+1)$-th iteration reduces to a cardinality-constrained linear maximization (proved in Appendix~\ref{sec:card_constrain}):
\begin{equation}\label{eq:mm_update}
    {\bf m}^{(t+1)} = \mathop{\arg\max}_{{\bf m}\in\{0,1\}^N:\,\mathbf{1}^\top{\bf m} = k} \sum_{i=1}^{N} m_i S_i^{(t)},
\end{equation}
where $S_i^{(t)} = [\nabla_{\mathbf{m}} f(\mathbf{m}^{(t)})]_i - \sum_{j=1}^{N} [\mathbf{H}_{\mathbf{m}}^{+}|_{\mathbf{m}=\mathbf{1}}]_{ij} \left( \frac{1}{2} - m_i^{(t)} + m_j^{(t)} \right)$ represents the importance score of $E_i$, which quantifies the trade-off between its synergy gain and redundancy penalty.

Each iteration therefore reduces to a simple Top-$k$ sorting: the maximizer ${\bf m}^{(t+1)}$ is determined by selecting the $k$ experts with the highest scores. Since $\mathcal{Q}$ does not necessarily lower-bound $\mathcal{U}$, we accept ${\bf m}^{(t+1)}$ only if $\mathcal{U}({\bf m}^{(t+1)}) > \mathcal{U}({\bf m}^{(t)})$ and otherwise terminate with ${\bf m}^{(t)}$, which costs only $\mathcal{O}(N^2)$ because $\mathcal{U}$ is an explicit quadratic function; the first iterate ${\bf m}^{(1)}$ is always accepted to initialize a feasible selection. The complete procedure is outlined in Appendix~\ref{sec:mm_algo}, and its convergence is guaranteed by the following theorem:
\begin{theorem}[Convergence Properties]\label{thm:convergence}
The sequence of expert selection vectors $\{{\bf m}^{(t)}\}_{t \ge 1}$ generated by the MM algorithm yields a monotonically non-decreasing objective value. Specifically, the following inequality holds:
\begin{equation}\label{eq:U_m_inequlity}
 \mathcal{U}({\bf m}^{(t+1)}) \ge \mathcal{U}({\bf m}^{(t)}), \quad \forall t \ge 1,
\end{equation}
where the inequality is strict whenever the selection changes. Given that the feasible region of ${\bf m}$ is finite, no selection is visited twice, and the MM algorithm terminates within a finite number of iterations.
\end{theorem}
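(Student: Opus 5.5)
The plan is to follow the standard MM sandwich argument for the surrogate $\mathcal{Q}$ in Eqn.~\ref{eqn:surro_func}, applied to the iterates as the MM algorithm defines them. That is, ${\bf m}^{(t+1)}$ maximizes Eqn.~\ref{eq:mm_update} and is retained as the next iterate under the acceptance rule stated just before the theorem. Fix $t\ge 1$ and set ${\bf m}'={\bf m}^{(t)}$.

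\textbf{Step 1 (tangency).} First I record that $\mathcal{Q}({\bf m}^{(t)};{\bf m}^{(t)})=\mathcal{U}({\bf m}^{(t)})$. Two facts give this:
\begin{itemize}
\item $L_f({\bf m}';{\bf m}')=f({\bf m}')$ by construction in Eqn.~\ref{f_m_eq_1}.
\item $U_g({\bf m}';{\bf m}')=g({\bf m}')$. This holds because the bound of Appendix~\ref{sec:cost_upper} is a first-order expansion of the concave residual $-\tfrac14\sum_{ij}(m_i-m_j)^2[\mathbf{H}^+_{\bf m}]_{ij}$, and such an expansion is exact at its base point.
\end{itemize}

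\textbf{Step 2 (surrogate maximization).} Since ${\bf m}^{(t)}$ is itself feasible ($\mathbf{1}^\top{\bf m}^{(t)}=k$), the Top-$k$ maximizer of Eqn.~\ref{eq:mm_update} satisfies $\mathcal{Q}({\bf m}^{(t+1)};{\bf m}^{(t)})\ge\mathcal{Q}({\bf m}^{(t)};{\bf m}^{(t)})$. Here I invoke Appendix~\ref{sec:card_constrain}: $\mathcal{Q}$ differs from $\sum_i m_i S_i^{(t)}$ only by a constant in ${\bf m}$.

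\textbf{Step 3 (minorization), the main obstacle.} The goal is $\mathcal{U}({\bf m})\ge\mathcal{Q}({\bf m};{\bf m}^{(t)})$, which splits into two parts.
\begin{itemize}
\item The $g$-part, $-g\ge -U_g$, is exactly Eqn.~\ref{g_m_eq_1}.
\item The $f$-part requires $f({\bf m})-L_f({\bf m};{\bf m}')=\tfrac12({\bf m}-{\bf m}')^\top\mathbf{H}^-_{\bf m}({\bf m}-{\bf m}')\ge 0$.
\end{itemize}
For the $f$-part I would first use two observations. The diagonal of $\mathbf{H}^-_{\bf m}$ vanishes, since $[\mathbf{H}_{\bf m}]_{ii}=\mathbb{E}[(\partial\mathcal{L}/\partial m_i)^2]\ge 0$ by Eqn.~\ref{eqn:appro_H_m}. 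The difference $\delta={\bf m}-{\bf m}'\in\{-1,0,1\}^N$ has equal numbers of $+1$ and $-1$ entries, so the remainder reduces to $\sum_{i\ne j}\delta_i\delta_j[\mathbf{H}^-_{\bf m}]_{ij}$. This sum is nonnegative when the swapped-in and swapped-out sets have no negative cross-coupling, but in general it can be negative, as the text after Eqn.~\ref{f_m_eq_1} already warns.

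\textbf{Step 4 (concluding the inequality).} I therefore split into two cases.
\begin{itemize}
\item If the remainder is nonnegative at ${\bf m}^{(t+1)}$, the chain
\[\mathcal{U}({\bf m}^{(t+1)})\ge\mathcal{Q}({\bf m}^{(t+1)};{\bf m}^{(t)})\ge\mathcal{Q}({\bf m}^{(t)};{\bf m}^{(t)})=\mathcal{U}({\bf m}^{(t)})\]
gives Eqn.~\ref{eq:U_m_inequlity} directly.
\item Otherwise the minorant may fail. Then the acceptance test of the algorithm is what guarantees the inequality: ${\bf m}^{(t+1)}$ becomes an iterate only when $\mathcal{U}({\bf m}^{(t+1)})>\mathcal{U}({\bf m}^{(t)})$, and otherwise the sequence stops at ${\bf m}^{(t)}$. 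In that case Eqn.~\ref{eq:U_m_inequlity} is vacuous beyond $t$, or trivially an equality if one regards the sequence as constant from then on.
\end{itemize}

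\textbf{Strictness and termination.} For strictness when the selection changes, note that every accepted change passes the strict test $\mathcal{U}({\bf m}^{(t+1)})>\mathcal{U}({\bf m}^{(t)})$. The only way to get ${\bf m}^{(t+1)}\ne{\bf m}^{(t)}$ with equal value is a rejected step, which terminates the run.

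Strict increase along all changing steps means the values $\mathcal{U}({\bf m}^{(t)})$ form a strictly increasing sequence until termination, so no selection vector can recur. The feasible set $\{{\bf m}\in\{0,1\}^N:\mathbf{1}^\top{\bf m}=k\}$ has $\binom{N}{k}$ elements. Hence the algorithm stops after at most $\binom{N}{k}$ iterations, each costing a Top-$k$ sort plus an $\mathcal{O}(N^2)$ evaluation of $\mathcal{U}$.
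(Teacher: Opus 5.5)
Your proposal is correct and follows essentially the same route as the paper. You establish tangency $\mathcal{Q}(\mathbf{m}^{(t)};\mathbf{m}^{(t)})=\mathcal{U}(\mathbf{m}^{(t)})$, surrogate improvement because $\mathbf{m}^{(t)}$ is feasible for $t\ge 1$, and the fact that $\mathcal{Q}$ need not minorize $\mathcal{U}$; the strict acceptance test then delivers monotonicity, strictness, no revisits, and termination within $\binom{N}{k}$ changes, exactly as in the paper. Your minorization analysis (the vanishing diagonal of $\mathbf{H}^-_{\mathbf{m}}$ and the first case of Step~4) is correct but redundant, since the acceptance rule alone gives the inequality for every retained iterate, which is how the paper closes the argument.
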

{\textit{Proof. See Appendix~\ref{sec:mm_convergence}.}}

\vspace{-0.2em}
\section{Experiments}
\label{sec:experiments}
\vspace{-0.6em}

\begin{table*}[t]
\centering
\caption{Experimental results on test accuracy (\%) ($\uparrow$) and convergence rounds ($\downarrow$). We evaluate DS-MoE and five baselines on Switch-Base-32, Switch-Base-64, and DeepSeek-MoE-16B.}
\label{tab:main_results}
\resizebox{0.98\linewidth}{!}{
\begin{tabular}{l|l|ccccc|ccccc}
\toprule
\multirow{2}{*}{Model}& \multirow{2}{*}{Method}& \multicolumn{5}{c|}{Test Accuracy (\%) $\uparrow$} & \multicolumn{5}{c}{Convergence Rounds $\downarrow$} \\
\cmidrule(lr){3-7} \cmidrule(lr){8-12}
 & & AGNews & PIQA & HellaSwag & MMLU & Avg. & AGNews & PIQA & HellaSwag & MMLU & Avg. \\
\midrule
\multirow{11}{*}[2ex]{Switch-Base-32} & Rand-MoE  
  & 87.3 $\pm$ 0.3 & 68.1 $\pm$ 0.8 & 65.9 $\pm$ 0.7 & 39.8 $\pm$ 1.1 & 65.3 $\pm$ 0.7 & 84 $\pm$ 6 & 26 $\pm$ 3 & 28 $\pm$ 4 & 34 $\pm$ 5 & 43.0 $\pm$ 4.5 \\
  \cmidrule(lr){2-12}
  & S-MoE      
  & 92.6 $\pm$ 0.2 & 74.3 $\pm$ 0.5 & 71.8 $\pm$ 0.4 & 46.4 $\pm$ 0.6 & 71.3 $\pm$ 0.4 & 59 $\pm$ 3 & 15 $\pm$ 2 & 17 $\pm$ 2 & 19 $\pm$ 3 & 27.5 $\pm$ 2.5 \\
  \cmidrule(lr){2-12}
  & SEER-MoE  
  & 93.1 $\pm$ 0.3 & 75.2 $\pm$ 0.4 & 72.7 $\pm$ 0.5 & 46.9 $\pm$ 0.5 & 72.0 $\pm$ 0.4 & 49 $\pm$ 3 & 12 $\pm$ 1 & 13 $\pm$ 2 & 17 $\pm$ 2 & 22.8 $\pm$ 2.0 \\
  \cmidrule(lr){2-12}
    & DiEP      
  & 93.3 $\pm$ 0.2 & 75.8 $\pm$ 0.3 & 73.2 $\pm$ 0.4 & 47.5 $\pm$ 0.3 & 72.5 $\pm$ 0.3 & 48 $\pm$ 2 & 12 $\pm$ 2 & 12 $\pm$ 1 & 15 $\pm$ 2 & 21.8 $\pm$ 1.8 \\
  \cmidrule(lr){2-12}
    & REAP      
  & 93.5 $\pm$ 0.2 & 76.4 $\pm$ 0.4 & 73.1 $\pm$ 0.3 & 48.1 $\pm$ 0.2 & 72.8 $\pm$ 0.3 & 45 $\pm$ 3 & 11 $\pm$ 2 & 13 $\pm$ 2 & 14 $\pm$ 1 & 20.8 $\pm$ 2.0 \\
  \cmidrule(lr){2-12}
  & \cellcolor{gray!30}\textbf{DS-MoE} 
  & \cellcolor{gray!30}\textbf{94.7} $\pm$ \textbf{0.1} & \cellcolor{gray!30}\textbf{78.6} $\pm$ \textbf{0.2} & \cellcolor{gray!30}\textbf{75.6} $\pm$ \textbf{0.2} & \cellcolor{gray!30}\textbf{51.8} $\pm$ \textbf{0.3} & \cellcolor{gray!30}\textbf{75.2} $\pm$ \textbf{0.2} & \cellcolor{gray!30}\textbf{32} $\pm$ \textbf{1} & \cellcolor{gray!30}\textbf{8} $\pm$ \textbf{1} & \cellcolor{gray!30}\textbf{10} $\pm$ \textbf{1} & \cellcolor{gray!30}\textbf{11} $\pm$ \textbf{2} & \cellcolor{gray!30}\textbf{15.3} $\pm$ \textbf{1.3} \\
\midrule
\multirow{11}{*}[2ex]{Switch-Base-64} & Rand-MoE  
  & 87.6 $\pm$ 0.4 & 68.7 $\pm$ 0.6 & 66.1 $\pm$ 0.9 & 40.3 $\pm$ 1.2 & 65.7 $\pm$ 0.8 & 82 $\pm$ 7 & 23 $\pm$ 4 & 25 $\pm$ 3 & 26 $\pm$ 5 & 39.0 $\pm$ 4.8 \\
  \cmidrule(lr){2-12}
  & S-MoE      
  & 92.8 $\pm$ 0.3 & 74.8 $\pm$ 0.6 & 72.1 $\pm$ 0.5 & 46.8 $\pm$ 0.8 & 71.6 $\pm$ 0.6 & 55 $\pm$ 4 & 13 $\pm$ 2 & 14 $\pm$ 3 & 16 $\pm$ 2 & 24.5 $\pm$ 2.8 \\
  \cmidrule(lr){2-12}
  & SEER-MoE  
  & 93.2 $\pm$ 0.2 & 76.3 $\pm$ 0.5 & 73.3 $\pm$ 0.4 & 47.9 $\pm$ 0.6 & 72.7 $\pm$ 0.4 & 47 $\pm$ 2 & 11 $\pm$ 2 & 13 $\pm$ 1 & 14 $\pm$ 3 & 21.3 $\pm$ 2.0 \\
  \cmidrule(lr){2-12}
    & DiEP      
  & 93.5 $\pm$ 0.2 & 76.5 $\pm$ 0.3 & 73.9 $\pm$ 0.4 & 48.1 $\pm$ 0.5 & 73.0 $\pm$ 0.4 & 45 $\pm$ 3 & 10 $\pm$ 1 & 11 $\pm$ 2 & 13 $\pm$ 1 & 19.8 $\pm$ 1.8 \\
  \cmidrule(lr){2-12}
    & REAP      
  & 93.9 $\pm$ 0.2 & 76.7 $\pm$ 0.5 & 74.5 $\pm$ 0.3 & 48.2 $\pm$ 0.4 & 73.3 $\pm$ 0.4 & 41 $\pm$ 2 & 9 $\pm$ 2 & 9 $\pm$ 1 & 11 $\pm$ 2 & 17.5 $\pm$ 1.8 \\
  \cmidrule(lr){2-12}
  & \cellcolor{gray!30}\textbf{DS-MoE} 
  & \cellcolor{gray!30}\textbf{94.9} $\pm$ \textbf{0.1} & \cellcolor{gray!30}\textbf{78.9} $\pm$ \textbf{0.2} & \cellcolor{gray!30}\textbf{75.8} $\pm$ \textbf{0.3} & \cellcolor{gray!30}\textbf{52.4} $\pm$ \textbf{0.4} & \cellcolor{gray!30}\textbf{75.5} $\pm$ \textbf{0.3} & \cellcolor{gray!30}\textbf{30} $\pm$ \textbf{2} & \cellcolor{gray!30}\textbf{7} $\pm$ \textbf{1} & \cellcolor{gray!30}\textbf{8} $\pm$ \textbf{1} & \cellcolor{gray!30}\textbf{10} $\pm$ \textbf{1} & \cellcolor{gray!30}\textbf{13.8} $\pm$ \textbf{1.3} \\
\midrule
\multirow{11}{*}[2ex]{DeepSeek-MoE-16B} & Rand-MoE  
  & 86.1 $\pm$ 0.5 & 69.8 $\pm$ 0.9 & 68.2 $\pm$ 0.8 & 32.8 $\pm$ 1.3 & 64.2 $\pm$ 0.9 & 138 $\pm$ 9 & 33 $\pm$ 6 & 35 $\pm$ 5 & 39 $\pm$ 7 & 61.3 $\pm$ 6.8 \\
  \cmidrule(lr){2-12}
  & S-MoE 
  & 92.1 $\pm$ 0.4 & 75.2 $\pm$ 0.7 & 73.4 $\pm$ 0.6 & 39.1 $\pm$ 0.9 & 70.0 $\pm$ 0.7 & 85 $\pm$ 6 & 20 $\pm$ 3 & 18 $\pm$ 4 & 22 $\pm$ 3 & 36.3 $\pm$ 4.0 \\
  \cmidrule(lr){2-12}
  & SEER-MoE  
  & 92.3 $\pm$ 0.3 & 77.1 $\pm$ 0.5 & 74.6 $\pm$ 0.6 & 40.3 $\pm$ 0.7 & 71.1 $\pm$ 0.5 & 77 $\pm$ 4 & 17 $\pm$ 3 & 17 $\pm$ 2 & 18 $\pm$ 3 & 32.3 $\pm$ 3.0 \\
  \cmidrule(lr){2-12}
    & DiEP      
  & 92.9 $\pm$ 0.2 & 77.6 $\pm$ 0.4 & 75.1 $\pm$ 0.5 & 42.1 $\pm$ 0.6 & 71.9 $\pm$ 0.4 & 73 $\pm$ 3 & 15 $\pm$ 2 & 16 $\pm$ 3 & 17 $\pm$ 2 & 30.3 $\pm$ 2.5 \\
  \cmidrule(lr){2-12}
    & REAP      
  & 93.1 $\pm$ 0.3 & 78.3 $\pm$ 0.5 & 75.4 $\pm$ 0.6 & 42.6 $\pm$ 0.5 & 72.3 $\pm$ 0.5 & 70 $\pm$ 4 & 14 $\pm$ 2 & 15 $\pm$ 2 & 15 $\pm$ 3 & 28.5 $\pm$ 2.8 \\
  \cmidrule(lr){2-12}
  &\cellcolor{gray!30} \textbf{DS-MoE} 
  & \cellcolor{gray!30}\textbf{94.1} $\pm$ \textbf{0.2} & \cellcolor{gray!30}\textbf{81.2} $\pm$ \textbf{0.3} & \cellcolor{gray!30}\textbf{77.9} $\pm$ \textbf{0.2} & \cellcolor{gray!30}\textbf{45.1} $\pm$ \textbf{0.4} & \cellcolor{gray!30}\textbf{74.6} $\pm$ \textbf{0.3} & \cellcolor{gray!30}\textbf{45} $\pm$ \textbf{2} & \cellcolor{gray!30}\textbf{12} $\pm$ \textbf{1} & \cellcolor{gray!30}\textbf{13} $\pm$ \textbf{1} & \cellcolor{gray!30}\textbf{14} $\pm$ \textbf{2} & \cellcolor{gray!30}\textbf{21.0} $\pm$ \textbf{1.5} \\
\bottomrule

\end{tabular}
}
\vspace{-0.2em}
\end{table*}

\subsection{Experimental Setup}
\label{subsec:exp_setup}
\vspace{-0.5em}
{\bf{Tasks and Datasets.}} We conduct extensive experiments across three tasks: i) {{\textit{Natural Language Understanding}}} on AGNews~\cite{zhang2015character} dataset; ii) {{\textit{Commonsense Reasoning}}} on PIQA~\cite{bisk2020piqa} and HellaSwag~\cite{zellers2019hellaswag} datasets; and iii) {{\textit{Advanced Multi-domain Reasoning}}} on  MMLU~\cite{hendrycksmeasuring} dataset. Comprehensive dataset descriptions and experimental setups are detailed in Appendix~\ref{sec:datasets}.

\textbf{Models.} We evaluate the proposed DS-MoE framework on three representative large-scale MoE architectures: \textit{Switch-Base-32}, \textit{Switch-Base-64}~\cite{fedus2022switch}, and \textit{DeepSeek-MoE-16B}~\cite{dai2024deepseekmoe}. The implementation details are shown in Appendix~\ref{sec:hyperparams}.

% \textbf{Models.} We evaluate the proposed DS-MoE framework on two representative large-scale MoE architectures: \textit{Switch-Base-32} and \textit{Switch-Base-64}, which distributes computation across 64 and 128 experts per layer to enable efficient scaling to massive model parameters while maintaining a sparse activation footprint, and \textit{DeepSeek-MoE-16B} with 8-rank QLoRA, which leverages a highly expressive sparse routing backbone to achieve performance comparable to the dense LLaMA2-7B model. The implementation details are shown in Appendix~\ref{appendix:model}.

\textbf{Baselines.} We compare CLR-MoE against five representative baselines: i) {\textit{REAP}}~\cite{lasby2026reap}: A one-shot selection method that scores each expert by its layer-wise router-weighted activation norm; ii) {\textit{DiEP}}~\cite{baidiep}: An expert selection strategy learn intra-layer expert importance to drop redundant experts via gradient-based evaluation; iii) {\textit{SEER-MoE}}~\cite{muzio2024seer}: A MoE framework that selects experts by heavy-hitter routing counts and then fine-tunes with regularization; iv) {\textit{Standard MoE (S-MoE)}}~\cite{shazeer2017outrageously}: a routing mechanism that activates a fixed number of experts per token based on top gating probabilities; and v) {\textit{Random MoE (Rand-MoE):}} A data-independent baseline that randomly samples active experts without relying on routing or gradient signals.

\vspace{-1em}

\subsection{Main Results}

\vspace{-1em}

\textbf{Natural Language Understanding.} Table~\ref{tab:main_results} shows DS-MoE consistently outperforms all baselines on the AGNews dataset, achieving over 94\% test accuracies across Switch-Base-32, Switch-Base-64, and DeepSeek-MoE-16B. This performance consistently surpasses both standard Top-$k$ routing (S-MoE) and advanced baselines (SEER-MoE, DiEP). Furthermore, by penalizing functional redundancy via positive Hessian entries, DS-MoE drastically expedites model training. On DeepSeek-MoE-16B, DS-MoE converges in just 45 rounds—reducing S-MoE's required 100 rounds by more than half, while maintaining a substantial speed advantage over both DiEP and SEER-MoE.

\textbf{Commonsense Reasoning.} Table~\ref{tab:main_results} shows that DS-MoE exhibits significant superiority on complex reasoning tasks PIQA and HellaSwag. The first-order methods like DiEP struggle in this regime due to the lack of capturing inter-expert dependencies. By contrast, DS-MoE exploits second-order interactions to identify synergistic expert combinations, reaches 78.6\% and 75.6\% test accuracy on Switch-Base-32, and outperforms the standard S-MoE as well as SEER-MoE and DiEP. Beyond accuracy improvements, DS-MoE accelerates model convergence for these tasks by 27\% to 46\% compared to all baselines.

\begin{table*}[t]
\centering
\caption{Robustness and Scalability Analysis on DeepSeek-MoE-16B. The model performance versus (a) expert sparsity $k$ with $N=64$ and (b) expert pool size $N$ with $k=2$.
}
\label{tab:micro_benchmarking}

% ==================== (a) ====================
\vspace{0.1cm}
{\centering \small {(a) Performance versus expert sparsity $k$} \par}
\vspace{0.1cm}

\resizebox{0.98\linewidth}{!}{
\begin{tabular}{l|c|l|ccccc|ccccc}
\toprule
\multirow{2}{*}{Model} & \multirow{2}{*}{$k$} & \multirow{2}{*}{Method} & \multicolumn{5}{c|}{Test Accuracy (\%) $\uparrow$} & \multicolumn{5}{c}{Convergence Rounds $\downarrow$} \\
\cmidrule(lr){4-8} \cmidrule(lr){9-13}
& & & AGNews & PIQA & HellaSwag & MMLU & Avg. & AGNews & PIQA & HellaSwag & MMLU & Avg. \\
\midrule
\multirow{18}{*}{DeepSeek-16B} 
& \multirow{6}{*}{1} 
& Rand-MoE & 83.2 $\pm$ 0.6 & 66.8 $\pm$ 1.2 & 65.9 $\pm$ 0.9 & 28.9 $\pm$ 1.5 & 61.2 $\pm$ 1.1 & 162 $\pm$ 14 & 59 $\pm$ 7 & 56 $\pm$ 9 & 57 $\pm$ 11 & 83.5 $\pm$ 10.3 \\
& & S-MoE & 90.1 $\pm$ 0.5 & 73.4 $\pm$ 0.9 & 72.1 $\pm$ 0.7 & 37.7 $\pm$ 1.2 & 68.3 $\pm$ 0.8 & 118 $\pm$ 9 & 26 $\pm$ 4 & 23 $\pm$ 5 & 29 $\pm$ 6 & 49.0 $\pm$ 6.0 \\
& & SEER-MoE & 90.9 $\pm$ 0.4 & 75.7 $\pm$ 0.7 & 73.2 $\pm$ 0.8 & 39.2 $\pm$ 0.9 & 69.8 $\pm$ 0.7 & 112 $\pm$ 6 & 19 $\pm$ 4 & 19 $\pm$ 3 & 21 $\pm$ 4 & 42.8 $\pm$ 4.3 \\
& & DiEP & 91.6 $\pm$ 0.3 & 75.9 $\pm$ 0.6 & 73.8 $\pm$ 0.7 & 40.8 $\pm$ 0.8 & 70.5 $\pm$ 0.6 & 104 $\pm$ 5 & 18 $\pm$ 3 & 17 $\pm$ 4 & 18 $\pm$ 3 & 39.3 $\pm$ 3.8 \\
& & REAP & 91.8 $\pm$ 0.3 & 76.8 $\pm$ 0.6 & 73.6 $\pm$ 0.6 & 41.2 $\pm$ 0.7 & 70.9 $\pm$ 0.6 & 99 $\pm$ 6 & 16 $\pm$ 3 & 16 $\pm$ 4 & 17 $\pm$ 3 & 37.0 $\pm$ 4.0 \\
& & \cellcolor{gray!30}\textbf{DS-MoE} & \cellcolor{gray!30}\textbf{93.2} $\pm$ \textbf{0.3} & \cellcolor{gray!30}\textbf{79.8} $\pm$ \textbf{0.4} & \cellcolor{gray!30}\textbf{76.3} $\pm$ \textbf{0.3} & \cellcolor{gray!30}\textbf{43.9} $\pm$ \textbf{0.5} & \cellcolor{gray!30}\textbf{73.3} $\pm$ \textbf{0.4} & \cellcolor{gray!30}\textbf{66} $\pm$ \textbf{3} & \cellcolor{gray!30}\textbf{12} $\pm$ \textbf{2} & \cellcolor{gray!30}\textbf{13} $\pm$ \textbf{2} & \cellcolor{gray!30}\textbf{15} $\pm$ \textbf{3} & \cellcolor{gray!30}\textbf{26.5} $\pm$ \textbf{2.5} \\
\cmidrule(lr){2-13}
& \multirow{6}{*}{2} 
& Rand-MoE & 86.1 $\pm$ 0.5 & 69.8 $\pm$ 0.9 & 68.2 $\pm$ 0.8 & 32.8 $\pm$ 1.3 & 64.2 $\pm$ 0.9 & 138 $\pm$ 9 & 33 $\pm$ 6 & 35 $\pm$ 5 & 39 $\pm$ 7 & 61.3 $\pm$ 6.8 \\
& & S-MoE & 92.1 $\pm$ 0.4 & 75.2 $\pm$ 0.7 & 73.4 $\pm$ 0.6 & 39.1 $\pm$ 0.9 & 70.0 $\pm$ 0.7 & 85 $\pm$ 6 & 20 $\pm$ 3 & 18 $\pm$ 4 & 22 $\pm$ 3 & 36.3 $\pm$ 4.0 \\
& & SEER-MoE & 92.3 $\pm$ 0.3 & 77.1 $\pm$ 0.5 & 74.6 $\pm$ 0.6 & 40.3 $\pm$ 0.7 & 71.1 $\pm$ 0.5 & 77 $\pm$ 4 & 17 $\pm$ 3 & 17 $\pm$ 2 & 18 $\pm$ 3 & 32.3 $\pm$ 3.0 \\
& & DiEP & 92.9 $\pm$ 0.2 & 77.6 $\pm$ 0.4 & 75.1 $\pm$ 0.5 & 42.1 $\pm$ 0.6 & 71.9 $\pm$ 0.4 & 73 $\pm$ 3 & 15 $\pm$ 2 & 16 $\pm$ 3 & 17 $\pm$ 2 & 30.3 $\pm$ 2.5 \\
& & REAP & 93.1 $\pm$ 0.3 & 78.3 $\pm$ 0.5 & 75.4 $\pm$ 0.6 & 42.6 $\pm$ 0.5 & 72.3 $\pm$ 0.5 & 70 $\pm$ 4 & 14 $\pm$ 2 & 15 $\pm$ 2 & 15 $\pm$ 3 & 28.5 $\pm$ 2.8 \\
& & \cellcolor{gray!30}\textbf{DS-MoE} & \cellcolor{gray!30}\textbf{94.1} $\pm$ \textbf{0.2} & \cellcolor{gray!30}\textbf{81.2} $\pm$ \textbf{0.3} & \cellcolor{gray!30}\textbf{77.9} $\pm$ \textbf{0.2} & \cellcolor{gray!30}\textbf{45.1} $\pm$ \textbf{0.4} & \cellcolor{gray!30}\textbf{74.6} $\pm$ \textbf{0.3} & \cellcolor{gray!30}\textbf{45} $\pm$ \textbf{2} & \cellcolor{gray!30}\textbf{12} $\pm$ \textbf{1} & \cellcolor{gray!30}\textbf{13} $\pm$ \textbf{1} & \cellcolor{gray!30}\textbf{14} $\pm$ \textbf{2} & \cellcolor{gray!30}\textbf{21.0} $\pm$ \textbf{1.5} \\
\cmidrule(lr){2-13}
& \multirow{6}{*}{3} 
& Rand-MoE & 86.7 $\pm$ 0.4 & 70.6 $\pm$ 0.7 & 69.4 $\pm$ 0.6 & 34.1 $\pm$ 1.1 & 65.2 $\pm$ 0.7 & 125 $\pm$ 7 & 30 $\pm$ 4 & 31 $\pm$ 4 & 33 $\pm$ 5 & 54.8 $\pm$ 5.0 \\
& & S-MoE & 92.4 $\pm$ 0.3 & 75.9 $\pm$ 0.5 & 73.9 $\pm$ 0.5 & 40.0 $\pm$ 0.7 & 70.6 $\pm$ 0.5 & 76 $\pm$ 5 & 18 $\pm$ 2 & 17 $\pm$ 3 & 19 $\pm$ 2 & 32.5 $\pm$ 3.0 \\
& & SEER-MoE & 92.7 $\pm$ 0.2 & 77.3 $\pm$ 0.4 & 75.1 $\pm$ 0.4 & 41.3 $\pm$ 0.6 & 71.6 $\pm$ 0.4 & 75 $\pm$ 3 & 16 $\pm$ 2 & 15 $\pm$ 1 & 17 $\pm$ 2 & 30.8 $\pm$ 2.0 \\
& & DiEP & 93.1 $\pm$ 0.1 & 78.0 $\pm$ 0.3 & 75.7 $\pm$ 0.3 & 42.6 $\pm$ 0.4 & 72.4 $\pm$ 0.3 & 68 $\pm$ 2 & 13 $\pm$ 1 & 14 $\pm$ 2 & 16 $\pm$ 1 & 27.8 $\pm$ 1.5 \\
& & REAP & 93.4 $\pm$ 0.2 & 78.9 $\pm$ 0.2 & 76.3 $\pm$ 0.3 & 43.4 $\pm$ 0.5 & 73.0 $\pm$ 0.3 & 64 $\pm$ 3 & 13 $\pm$ 2 & 13 $\pm$ 1 & 15 $\pm$ 2 & 26.3 $\pm$ 2.0 \\
& & \cellcolor{gray!30}\textbf{DS-MoE} & \cellcolor{gray!30}\textbf{94.5} $\pm$ \textbf{0.1} & \cellcolor{gray!30}\textbf{81.4} $\pm$ \textbf{0.2} & \cellcolor{gray!30}\textbf{78.2} $\pm$ \textbf{0.1} & \cellcolor{gray!30}\textbf{45.4} $\pm$ \textbf{0.2} & \cellcolor{gray!30}\textbf{74.9} $\pm$ \textbf{0.2} & \cellcolor{gray!30}\textbf{38} $\pm$ \textbf{1} & \cellcolor{gray!30}\textbf{11} $\pm$ \textbf{1} & \cellcolor{gray!30}\textbf{12} $\pm$ \textbf{1} & \cellcolor{gray!30}\textbf{12} $\pm$ \textbf{1} & \cellcolor{gray!30}\textbf{18.3} $\pm$ \textbf{1.0} \\
\bottomrule
\end{tabular}
}

% ==================== (b) ====================
\vspace{0.3cm}
{\centering \small {(b) Performance versus expert pool size $N$} \par}
\vspace{0.1cm}

\resizebox{0.98\linewidth}{!}{
\begin{tabular}{l|c|l|ccccc|ccccc}
\toprule
\multirow{2}{*}{Model} & \multirow{2}{*}{$N$} & \multirow{2}{*}{Method} & \multicolumn{5}{c|}{Test Accuracy (\%) $\uparrow$} & \multicolumn{5}{c}{Convergence Rounds $\downarrow$} \\
\cmidrule(lr){4-8} \cmidrule(lr){9-13}
& & & AGNews & PIQA & HellaSwag & MMLU & Avg. & AGNews & PIQA & HellaSwag & MMLU & Avg. \\
\midrule
\multirow{18}{*}{DeepSeek-16B} 
& \multirow{6}{*}{32} 
& Rand-MoE & 84.3 $\pm$ 0.8 & 67.9 $\pm$ 1.2 & 66.8 $\pm$ 1.0 & 31.2 $\pm$ 1.7 & 62.6 $\pm$ 1.1 & 153 $\pm$ 12 & 41 $\pm$ 8 & 51 $\pm$ 8 & 45 $\pm$ 9 & 72.5 $\pm$ 8.7 \\
& & S-MoE & 90.8 $\pm$ 0.5 & 72.5 $\pm$ 0.8 & 71.7 $\pm$ 0.9 & 38.0 $\pm$ 1.3 & 68.3 $\pm$ 0.8 & 115 $\pm$ 9 & 22 $\pm$ 4 & 24 $\pm$ 5 & 28 $\pm$ 4 & 47.3 $\pm$ 5.0 \\
& & SEER-MoE & 91.4 $\pm$ 0.4 & 75.6 $\pm$ 0.7 & 73.7 $\pm$ 0.8 & 39.3 $\pm$ 0.9 & 70.0 $\pm$ 0.7 & 108 $\pm$ 7 & 19 $\pm$ 4 & 19 $\pm$ 4 & 21 $\pm$ 5 & 41.8 $\pm$ 4.2 \\
& & DiEP & 91.8 $\pm$ 0.4 & 76.2 $\pm$ 0.7 & 74.3 $\pm$ 0.6 & 41.2 $\pm$ 1.0 & 70.9 $\pm$ 0.6 & 103 $\pm$ 5 & 18 $\pm$ 3 & 17 $\pm$ 4 & 17 $\pm$ 4 & 38.8 $\pm$ 3.5 \\
& & REAP & 92.0 $\pm$ 0.3 & 76.5 $\pm$ 0.6 & 74.6 $\pm$ 0.6 & 41.7 $\pm$ 0.8 & 71.2 $\pm$ 0.6 & 98 $\pm$ 4 & 16 $\pm$ 3 & 15 $\pm$ 4 & 16 $\pm$ 3 & 36.3 $\pm$ 3.5 \\
& & \cellcolor{gray!30}\textbf{DS-MoE} & \cellcolor{gray!30}\textbf{93.1} $\pm$ \textbf{0.3} & \cellcolor{gray!30}\textbf{80.0} $\pm$ \textbf{0.5} & \cellcolor{gray!30}\textbf{77.2} $\pm$ \textbf{0.4} & \cellcolor{gray!30}\textbf{44.0} $\pm$ \textbf{0.7} & \cellcolor{gray!30}\textbf{73.6} $\pm$ \textbf{0.4} & \cellcolor{gray!30}\textbf{69} $\pm$ \textbf{4} & \cellcolor{gray!30}\textbf{13} $\pm$ \textbf{2} & \cellcolor{gray!30}\textbf{14} $\pm$ \textbf{2} & \cellcolor{gray!30}\textbf{15} $\pm$ \textbf{3} & \cellcolor{gray!30}\textbf{27.8} $\pm$ \textbf{2.7} \\
\cmidrule(lr){2-13}
& \multirow{6}{*}{48} 
& Rand-MoE & 85.3 $\pm$ 0.6 & 69.0 $\pm$ 1.2 & 67.6 $\pm$ 0.8 & 32.1 $\pm$ 1.4 & 63.5 $\pm$ 1.0 & 144 $\pm$ 10 & 37 $\pm$ 6 & 41 $\pm$ 7 & 43 $\pm$ 8 & 66.3 $\pm$ 7.8 \\
& & S-MoE & 91.5 $\pm$ 0.5 & 74.5 $\pm$ 0.7 & 73.0 $\pm$ 0.8 & 38.6 $\pm$ 1.0 & 69.4 $\pm$ 0.7 & 106 $\pm$ 6 & 20 $\pm$ 4 & 23 $\pm$ 4 & 23 $\pm$ 4 & 43.0 $\pm$ 4.5 \\
& & SEER-MoE & 91.9 $\pm$ 0.4 & 76.5 $\pm$ 0.5 & 74.2 $\pm$ 0.7 & 39.9 $\pm$ 0.9 & 70.6 $\pm$ 0.5 & 102 $\pm$ 5 & 18 $\pm$ 3 & 17 $\pm$ 3 & 19 $\pm$ 3 & 39.0 $\pm$ 3.5 \\
& & DiEP & 92.4 $\pm$ 0.2 & 77.0 $\pm$ 0.6 & 74.8 $\pm$ 0.6 & 41.7 $\pm$ 0.8 & 71.5 $\pm$ 0.5 & 76 $\pm$ 4 & 17 $\pm$ 3 & 17 $\pm$ 3 & 17 $\pm$ 2 & 31.8 $\pm$ 3.0 \\
& & REAP & 92.7 $\pm$ 0.3 & 77.3 $\pm$ 0.6 & 75.0 $\pm$ 0.5 & 42.3 $\pm$ 0.6 & 71.8 $\pm$ 0.5 & 73 $\pm$ 3 & 15 $\pm$ 3 & 16 $\pm$ 4 & 15 $\pm$ 2 & 29.8 $\pm$ 3.0 \\
& & \cellcolor{gray!30}\textbf{DS-MoE} & \cellcolor{gray!30}\textbf{93.7} $\pm$ \textbf{0.3} & \cellcolor{gray!30}\textbf{80.7} $\pm$ \textbf{0.4} & \cellcolor{gray!30}\textbf{77.6} $\pm$ \textbf{0.4} & \cellcolor{gray!30}\textbf{44.6} $\pm$ \textbf{0.5} & \cellcolor{gray!30}\textbf{74.2} $\pm$ \textbf{0.3} & \cellcolor{gray!30}\textbf{52} $\pm$ \textbf{3} & \cellcolor{gray!30}\textbf{13} $\pm$ \textbf{2} & \cellcolor{gray!30}\textbf{12} $\pm$ \textbf{1} & \cellcolor{gray!30}\textbf{14} $\pm$ \textbf{2} & \cellcolor{gray!30}\textbf{22.8} $\pm$ \textbf{2.3} \\
\cmidrule(lr){2-13}
& \multirow{6}{*}{64} 
& Rand-MoE & 86.1 $\pm$ 0.5 & 69.8 $\pm$ 0.9 & 68.2 $\pm$ 0.8 & 32.8 $\pm$ 1.3 & 64.2 $\pm$ 0.9 & 138 $\pm$ 9 & 33 $\pm$ 6 & 35 $\pm$ 5 & 39 $\pm$ 7 & 61.3 $\pm$ 6.8 \\
& & S-MoE & 92.1 $\pm$ 0.4 & 75.2 $\pm$ 0.7 & 73.4 $\pm$ 0.6 & 39.1 $\pm$ 0.9 & 70.0 $\pm$ 0.7 & 85 $\pm$ 6 & 20 $\pm$ 3 & 18 $\pm$ 4 & 22 $\pm$ 3 & 36.3 $\pm$ 4.0 \\
& & SEER-MoE & 92.3 $\pm$ 0.3 & 77.1 $\pm$ 0.5 & 74.6 $\pm$ 0.6 & 40.3 $\pm$ 0.7 & 71.1 $\pm$ 0.5 & 77 $\pm$ 4 & 17 $\pm$ 3 & 17 $\pm$ 2 & 18 $\pm$ 3 & 32.3 $\pm$ 3.0 \\
& & DiEP & 92.9 $\pm$ 0.2 & 77.6 $\pm$ 0.4 & 75.1 $\pm$ 0.5 & 42.1 $\pm$ 0.6 & 71.9 $\pm$ 0.4 & 73 $\pm$ 3 & 15 $\pm$ 2 & 16 $\pm$ 3 & 17 $\pm$ 2 & 30.3 $\pm$ 2.5 \\
& & REAP & 93.1 $\pm$ 0.3 & 78.3 $\pm$ 0.5 & 75.4 $\pm$ 0.6 & 42.6 $\pm$ 0.5 & 72.3 $\pm$ 0.5 & 70 $\pm$ 4 & 14 $\pm$ 2 & 15 $\pm$ 2 & 15 $\pm$ 3 & 28.5 $\pm$ 2.8 \\
& & \cellcolor{gray!30}\textbf{DS-MoE} & \cellcolor{gray!30}\textbf{94.1} $\pm$ \textbf{0.2} & \cellcolor{gray!30}\textbf{81.2} $\pm$ \textbf{0.3} & \cellcolor{gray!30}\textbf{77.9} $\pm$ \textbf{0.2} & \cellcolor{gray!30}\textbf{45.1} $\pm$ \textbf{0.4} & \cellcolor{gray!30}\textbf{74.6} $\pm$ \textbf{0.3} & \cellcolor{gray!30}\textbf{45} $\pm$ \textbf{2} & \cellcolor{gray!30}\textbf{12} $\pm$ \textbf{1} & \cellcolor{gray!30}\textbf{13} $\pm$ \textbf{1} & \cellcolor{gray!30}\textbf{14} $\pm$ \textbf{2} & \cellcolor{gray!30}\textbf{21.0} $\pm$ \textbf{1.5} \\
\bottomrule
\vspace{-0.5em}
\end{tabular}
}

\vspace{-1.1em}
\end{table*}

\textbf{Advanced Multi-Domain Reasoning.} Table~\ref{tab:main_results} highlights DS-MoE's significant advantage on the MMLU benchmark. While SEER-MoE and DiEP marginally improve accuracy over the isolated S-MoE (which yields sub-optimal accuracies of 46.4\%, 46.8\%, 39.1\%, and slow convergence of 19, 16, 22 rounds), they still fail to accurately capture complex expert functional correlation. In contrast, by the design of difference-of-submodular optimization, DS-MoE successfully decouples functional synergy and redundancy across experts. This enables DS-MoE to achieve state-of-the-art accuracies (51.8\%, 52.4\%, 45.1\%) while minimizing convergence to 11, 10, and 14 rounds across the three architectures.

\vspace{-1em}
\subsection{Micro-Benchmarking}
\vspace{-0.7em}

\textbf{Robustness to Expert Sparsity ($k$).} 
Table~\ref{tab:micro_benchmarking}a evaluates model's robustness across varying sparsity constraints ($k \in \{1, 2, 3\}$) on DeepSeek-MoE-16B. Under extreme sparsity ($k=1$), S-MoE, SEER-MoE, and DiEP degrade significantly to 68.3\%, 69.8\%, and 70.5\% average accuracy, respectively. In contrast, DS-MoE preserves critical expert synergies to achieve a superior 73.3\% average accuracy. As the routing budget expands ($k=3$), DS-MoE efficiently translates the added capacity into a peak 74.9\% accuracy. Beyond accuracy, DS-MoE consistently accelerates convergence over all baselines. At $k = 3$, it reduces the average convergence time to merely 18.3 rounds, achieving approximately 1.5$\times$, 1.7$\times$, and 1.8$\times$ speedups over DiEP (27.8 rounds), SEER-MoE (30.8 rounds), and S-MoE (32.5 rounds), respectively. 
% This underscores its capability to balance synergy and redundancy effectively, regardless of capacity limits. 

\textbf{Scalability Across Expert Pool Sizes ($N$).} 
Table \ref{tab:micro_benchmarking}b evaluates model performance as the expert search space expands from $N=32$ to $64$ at a fixed sparsity ($k = 2$). While an enlarged pool increases model capacity, it triggers an exponential explosion in possible expert combinations, making optimal routing increasingly difficult for standard mechanisms. Despite this, DS-MoE consistently outperforms all baselines, improving the average accuracy from 73.6\% to 74.6\% at $N=64$. Furthermore, DS-MoE maintains stable convergence; at $N=64$, it converges in just 21.0 rounds, preserving a significant efficiency advantage over DiEP (30.3 rounds) and SEER-MoE (32.3 rounds). This confirms DS-MoE’s robust capability to reliably extract synergistic expert combinations at scale.

\begin{figure}[t!]
    \centering
    % ---------------------- Row 1: Hessian vs First-order (Top) ----------------------
    \subfloat[AGNews]{
        \includegraphics[width=0.20\linewidth]{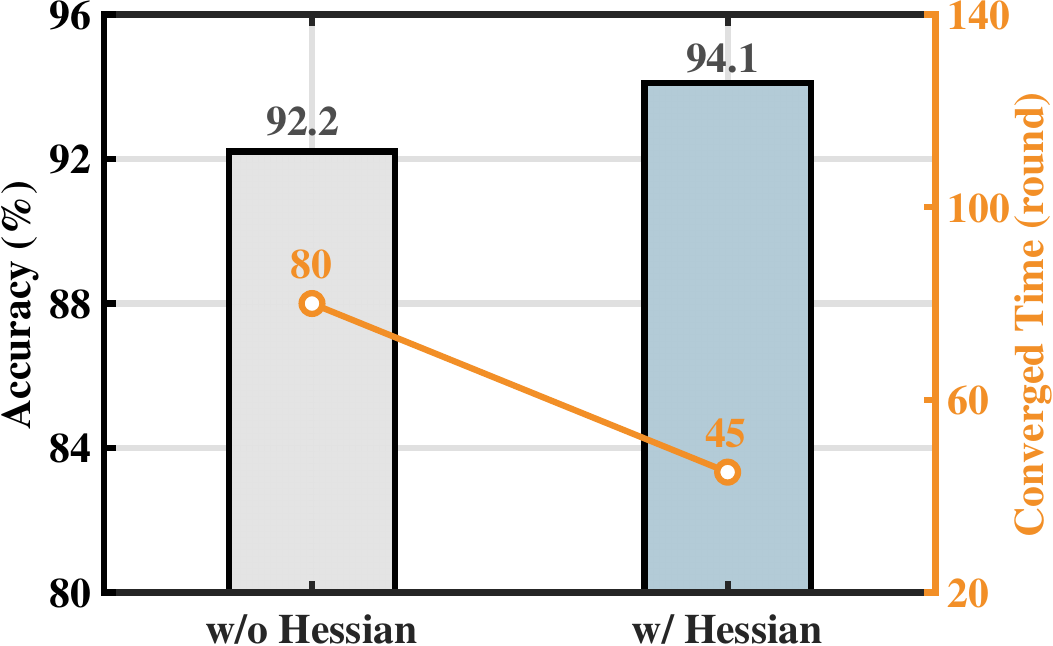}}
    \hfill
    \subfloat[PIQA]{
        \includegraphics[width=0.20\linewidth]{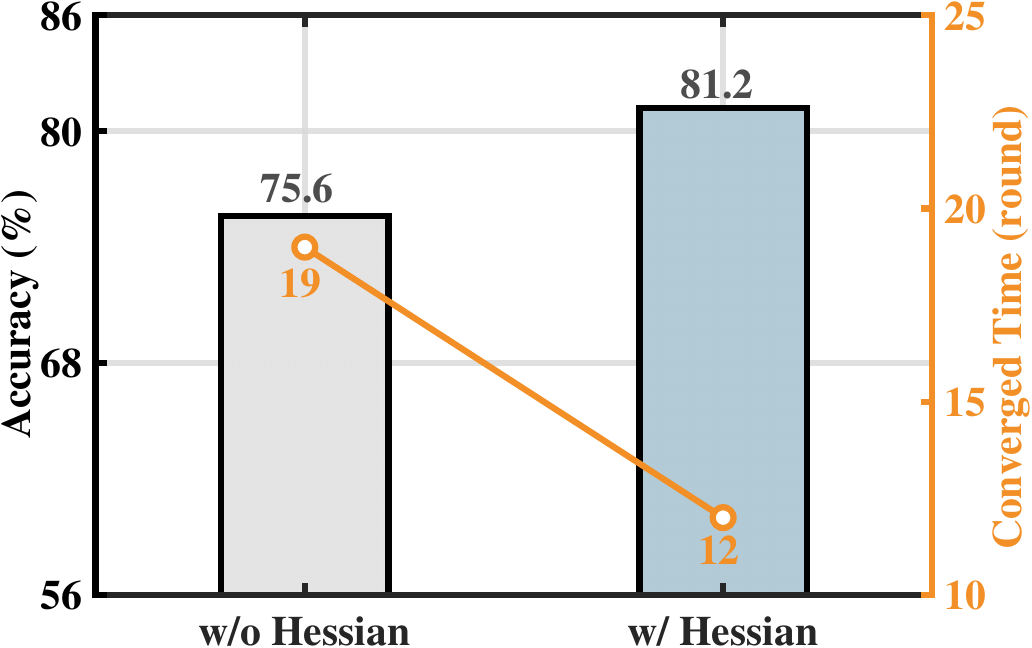}}
    \hfill
    \subfloat[HellaSwag]{
        \includegraphics[width=0.20\linewidth]{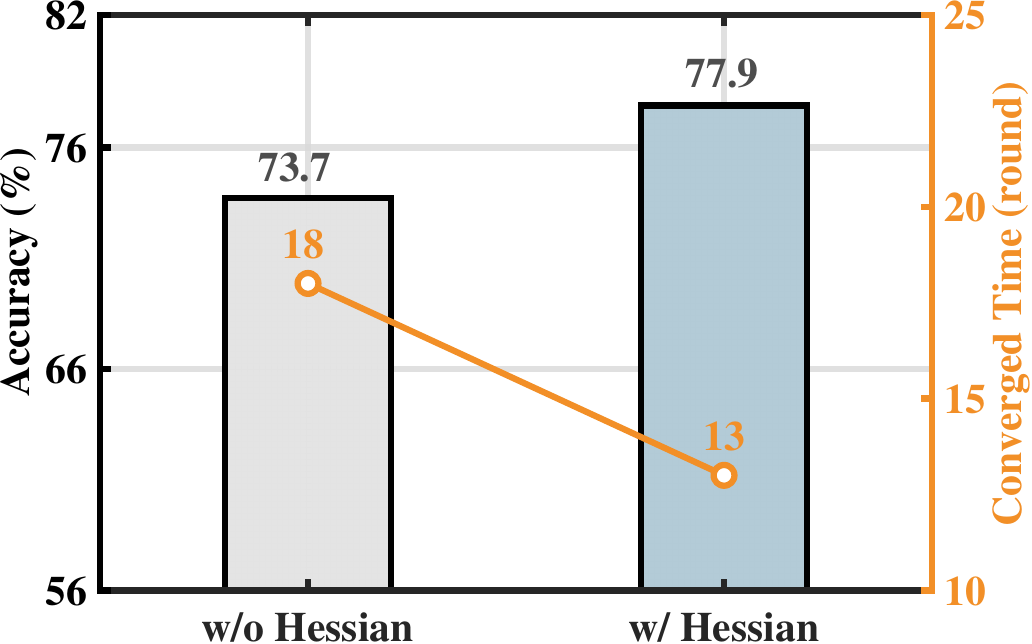}}
    \hfill
    \subfloat[MMLU]{
        \includegraphics[width=0.20\linewidth]{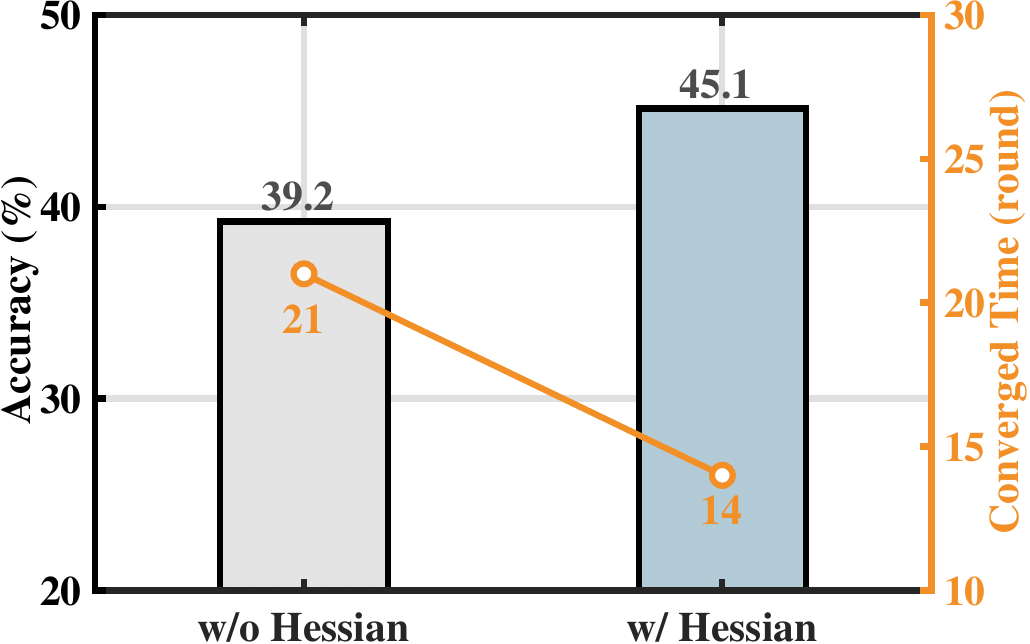}}
    \\[-2.5ex]

    % ---------------------- Row 2: DS Decomposition (Middle) ----------------------
    \subfloat[AGNews]{
        \includegraphics[width=0.20\linewidth]{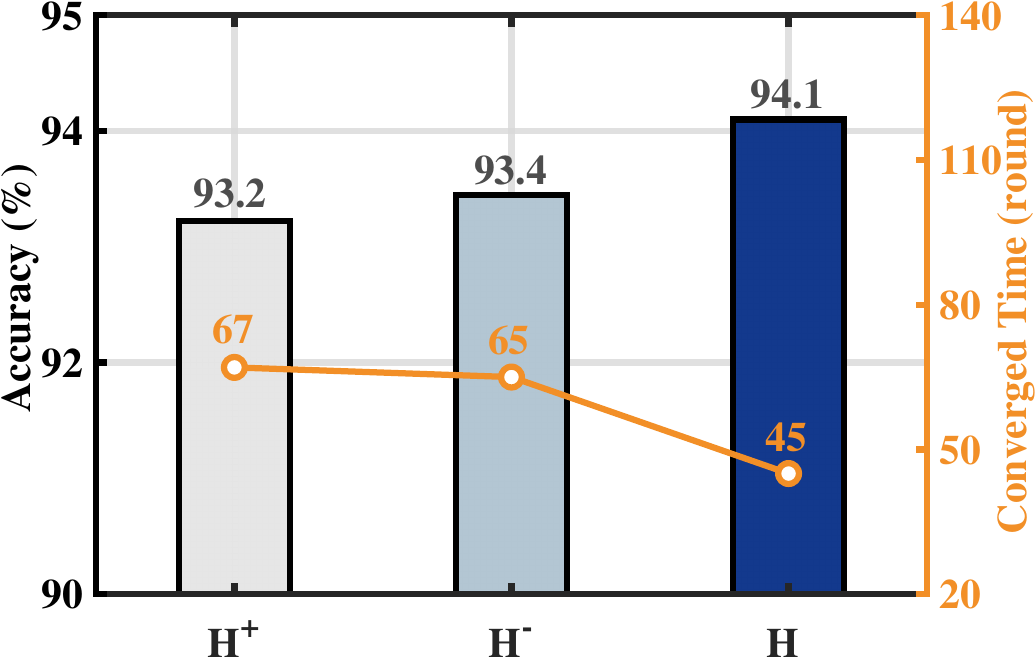}}
    \hfill
    \subfloat[PIQA]{
        \includegraphics[width=0.20\linewidth]{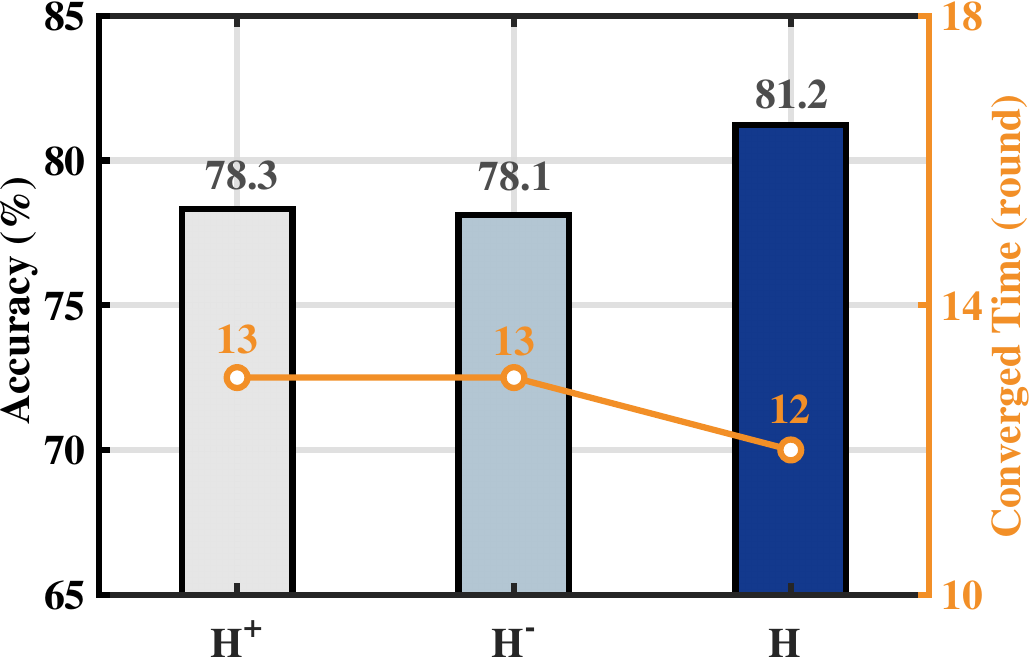}}
    \hfill
    \subfloat[HellaSwag]{
        \includegraphics[width=0.20\linewidth]{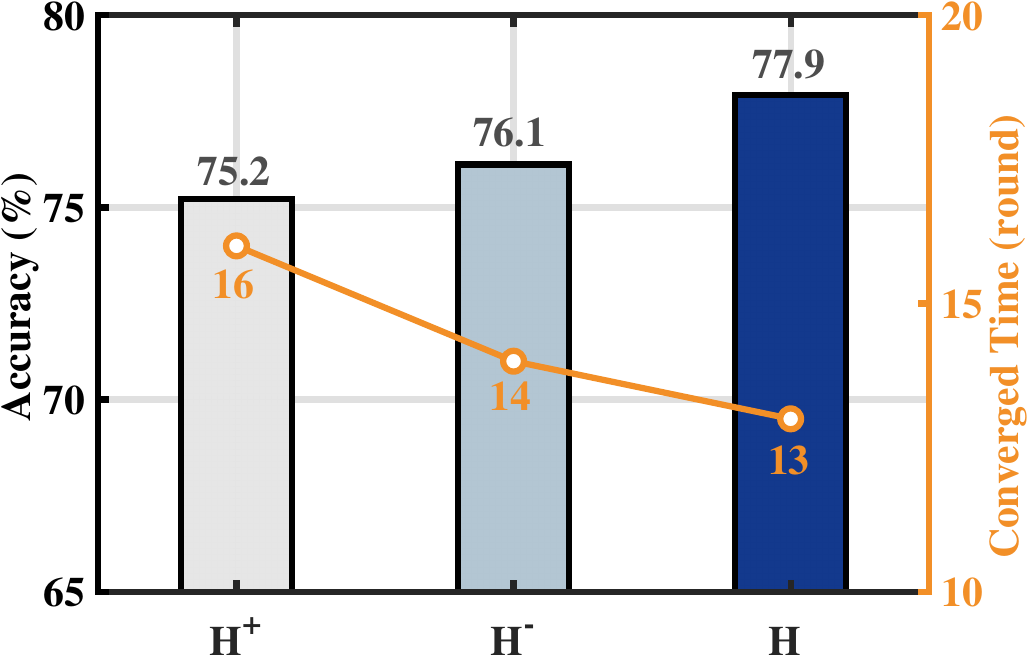}}
    \hfill
    \subfloat[MMLU]{
        \includegraphics[width=0.20\linewidth]{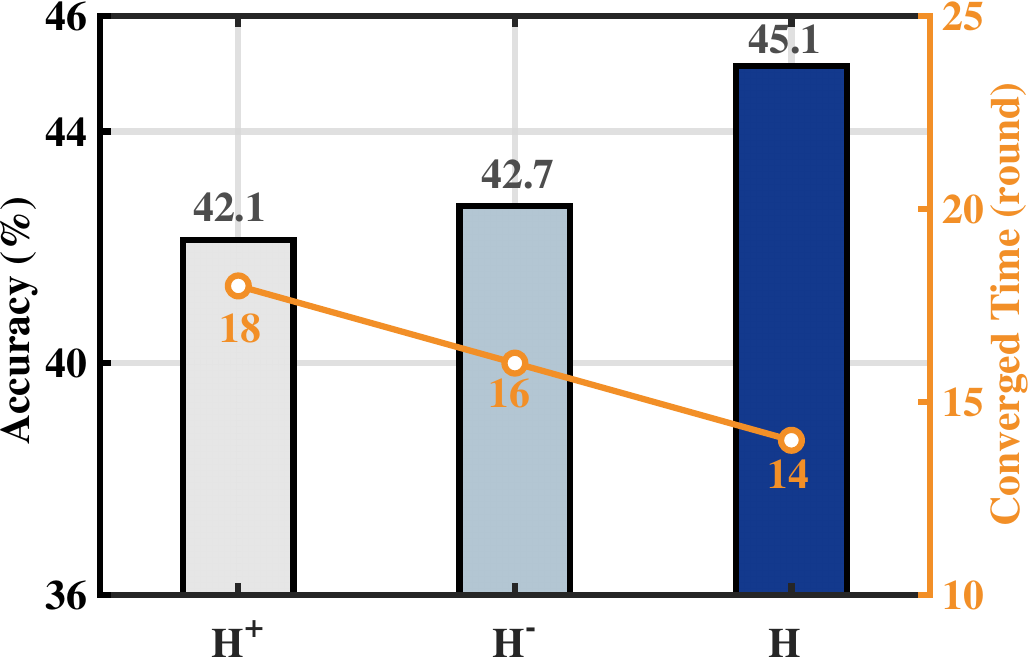}}
    \\[-2.5ex]
    % ---------------------- Row 3: MM Algorithm (Bottom) ----------------------
    \subfloat[AGNews]{
        \includegraphics[width=0.20\linewidth]{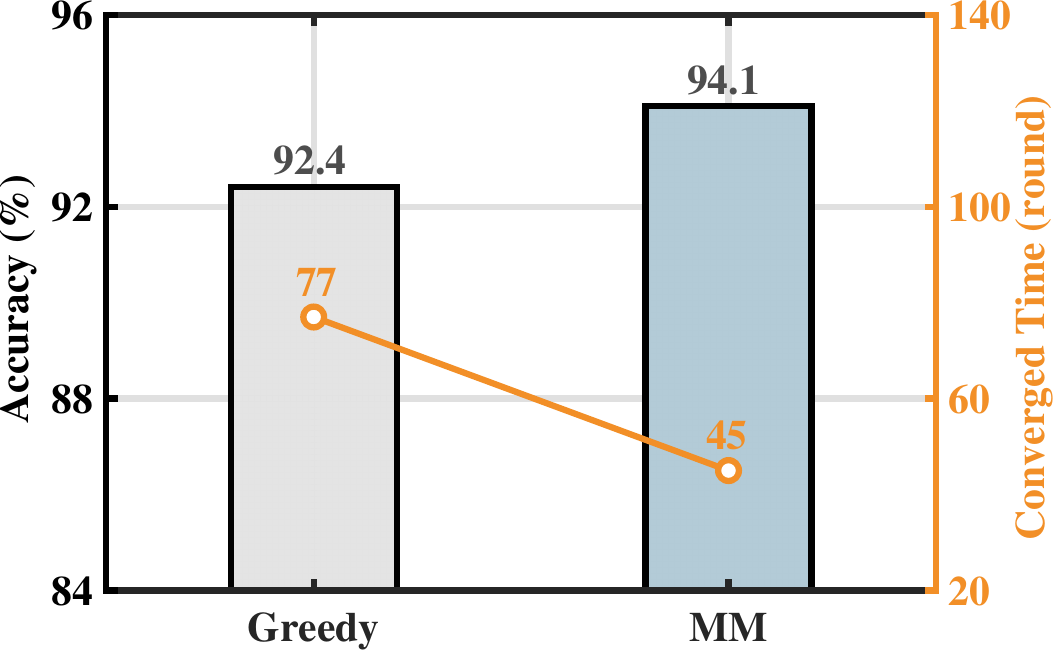}}
    \hfill
    \subfloat[PIQA]{
        \includegraphics[width=0.20\linewidth]{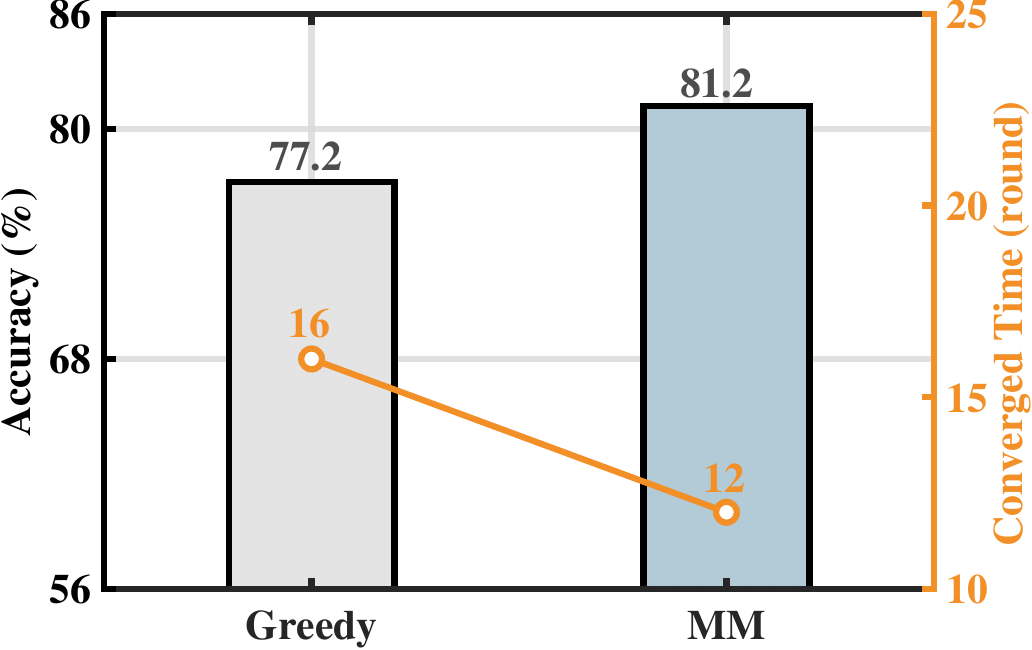}}
    \hfill
    \subfloat[HellaSwag]{
        \includegraphics[width=0.20\linewidth]{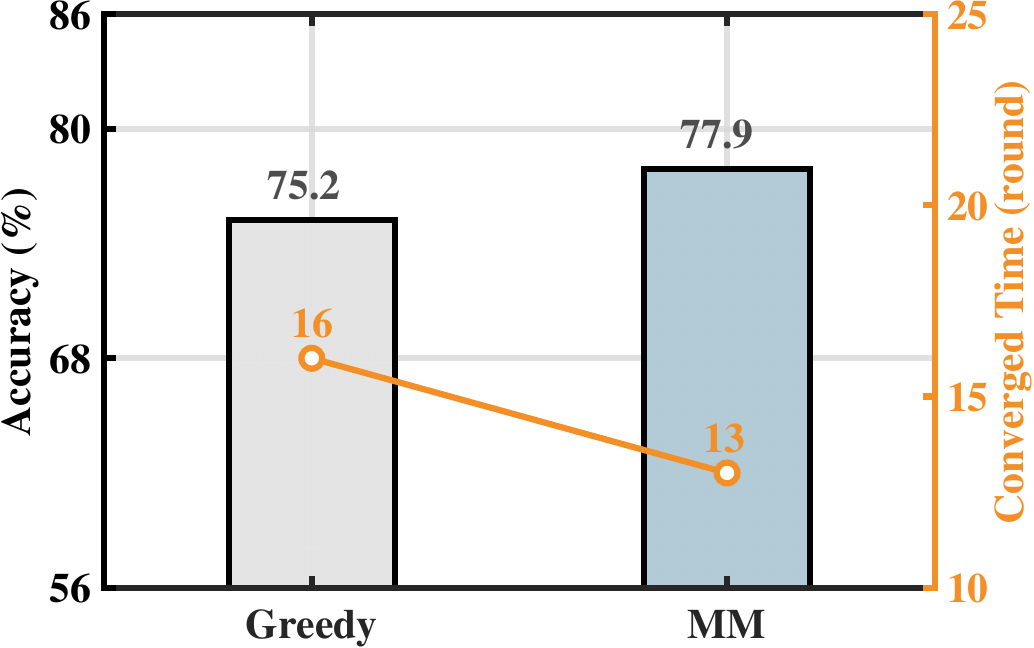}}
    \hfill
    \subfloat[MMLU]{
        \includegraphics[width=0.20\linewidth]{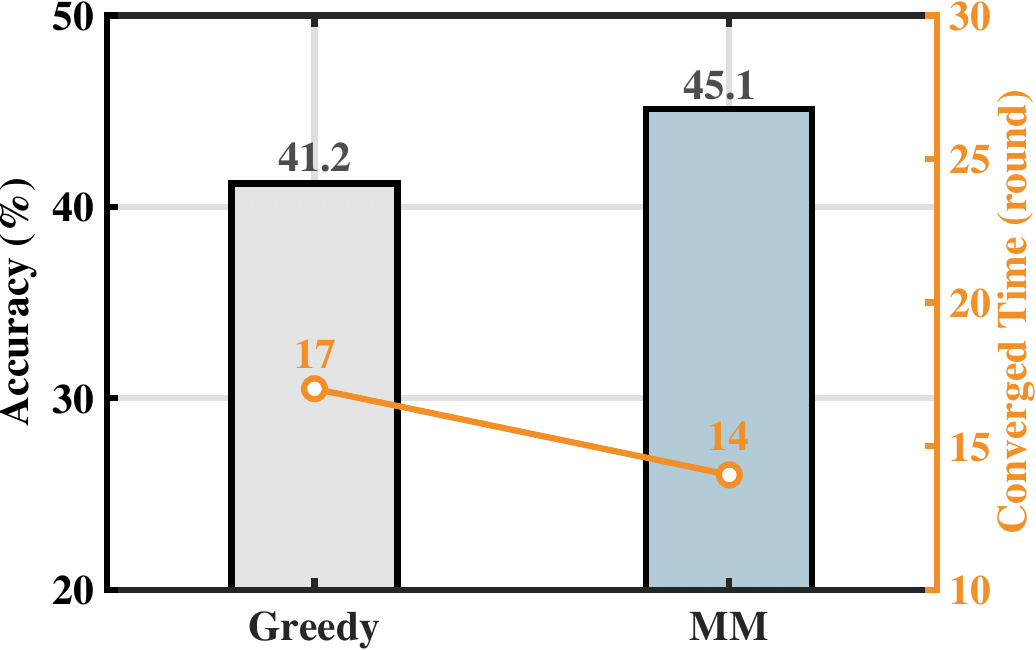}}
    \\
    \caption{{Comprehensive ablation study of the DS-MoE framework on DeepSeek-MoE-16B with $k=2$ and $N=64$. {{(a)-(d)}:} Effectiveness of second-order Hessian over first-order proxies. {{(e)-(h)}:} Necessity of DS decomposition. {{(i)-(l)}:} Optimization superiority of the proposed MM algorithm.}}
    \label{fig:abalations}
    \vspace{-1em}
\end{figure}

% {Comprehensive ablation study of the DS-MoE framework on DeepSeek-MoE-16B with $k=2$ and $N=64$.  {\bf{Top:}} Effectiveness of second-order Hessian over first-order proxies. {\bf{Middle:}} Necessity of DS decomposition. {\bf{Bottom:}} Optimization superiority of the proposed MM algorithm.
%   }
\subsection{Ablation Study}
{\bf{Study on the Impact of Hessian Matrix.}} Figures~\ref{fig:abalations}(a)-\ref{fig:abalations}(d) compare a purely first-order gradient proxy (w/o Hessian) against our framework utilizing the second-order Hessian matrix (w/ Hessian). The results demonstrate that integrating the Hessian matrix consistently yields higher test accuracy and a sharp reduction in convergence rounds. This validates a fundamental limitation of first-order heuristics: they fail to capture loss landscape curvature and suffer from vanishing gradients in flat regions. In contrast, formulating expert selection via the second-order Hessian explicitly captures combinatorial inter-expert dependencies, enabling accurate evaluation of expert synergies and circumventing sub-optimal local minima.

{\bf{Study on the Impact of DS Decomposition.}} Figures~\ref{fig:abalations}(e)-\ref{fig:abalations}(h) evaluates the necessity of the DS decomposition by comparing the complete framework (${\bf H}$) against isolated variants employing strictly the redundancy penalty (${\bf H}^+$) or synergy reward (${\bf H}^-$). The results reveal that deploying either component individually yields marginal performance gains, integrating both via the complete DS decomposition consistently achieves the highest test accuracy and fastest convergence. This confirms that mathematically decoupling redundancy reduction from synergy maximization is indispensable for fully exploiting inter-expert dependencies and optimizing routing efficacy.

{\bf{Study on the Impact of MM Algorithm.}} Figures~\ref{fig:abalations}(i)-\ref{fig:abalations}(l) evaluate the effectiveness of the proposed MM algorithm against a standard local greedy search strategy (Greedy). We see that the greedy approach is highly susceptible to sub-optimal local minima, manifesting as degraded test accuracy and prolonged convergence times. Conversely, the MM algorithm consistently achieves superior accuracy while significantly reducing the required training rounds. These empirical gains validate the monotonic convergence guarantees intrinsic to the MM framework. By bounding the highly non-convex objective via tight surrogate functions, the MM algorithm can circumvent the sub-optimal expert selection inherent to heuristic greedy searches.

\vspace{-0.5em}
\section{Conclusion}
\vspace{-0.5em}
In this paper, we introduced DS-MoE, a theoretically grounded MoE selection framework. By leveraging the Hessian of the loss degradation, we quantify complex inter-expert dependencies, revealing fundamental duality between functional redundancy and cooperative synergy. To effectively harness this duality, we formulate the expert selection process as a DS optimization problem and develop a tailored MM algorithm with a provable convergence guarantee. 
% Extensive empirical evaluations demonstrate that DS-MoE achieves better model performance than state-of-the-art baselines. 
Finally, we discuss the limitations of DS-MoE and outline promising directions for future research in Appendix~\ref{sec:limitations}, and present the practical impact of DS-MoE in Appendix~\ref{sec:practical_impact}.

\bibliography{iclr2027_conference}
\bibliographystyle{iclr2027_conference}

% \clearpage
\appendix
\section{Appendix}

\addtocontents{toc}{\protect\setcounter{tocdepth}{2}}

% \begin{center}
%     {\LARGE \textbf{Appendix}}
% \end{center}
% \vspace{1em}

% 2. 生成目录：使用原生 tableofcontents，确保排版风格与 NeurIPS 一致
{
    \hypersetup{linkcolor=black} % 将目录链接设为黑色，避免红框/蓝字影响美观
    \renewcommand{\contentsname}{Contents} % 将目录标题改为 Table of Contents
    \tableofcontents
}

\vspace{2em}
\hrule height 1pt
\vspace{2em}

% --- A. Extended Related Work ---
\section{Related Work and Research Gap} \label{sec:related_work}

% \subsection{Advances in MoE Architecture and Routing} \label{sec:moe_routing}

The MoE architecture has emerged as a pivotal paradigm for scaling model capacity without incurring proportional computational overhead via sparse expert activation. Recent advancements in MoE primarily focus on routing optimizations and structural refinements:

For {\textit{routing optimizations}}, existing methods are categorized into token-level~\cite{shazeer2017outrageously,fedus2022switch,lepikhin2020gshard,jiang2024mixtral} and expert-level routing~\cite{zhou2022mixture,puigcerver2024sparse,lin2024scattermoe}. Token-level routing, pioneered by Sparsely-Gated MoE~\cite{shazeer2017outrageously}, employs a Top-$k$ mechanism to dispatch each token to an expert combination based on localized routing probabilities; Switch Transformer~\cite{fedus2022switch} streamlined this mechanism by routing each token to only the Top-1 expert, thereby improving training efficiency; GShard~\cite{lepikhin2020gshard} introduced automatic sharding to enable conditional computation in LLMs; Mixtral~\cite{jiang2024mixtral} demonstrated the scalability of a sparse Top-2 routing strategy within open-weight foundation models. For expert-level routing, expert choice routing~\cite{zhou2022mixture} inverted this logic by allowing experts to selectively pull their top-scoring tokens up to a predefined capacity for enforcing strict load balancing; Soft-MoE~\cite{puigcerver2024sparse} addressed discrete routing instabilities by allowing experts to pull continuous, weighted combinations of tokens; ScatterMoE~\cite{lin2024scattermoe} optimized the expert choice mechanism by eliminating padding and redundant memory accesses during token aggregation.

For {\textit{structural refinements}},  several frameworks have been proposed to enhance system execution efficiency:~\cite{he2021fastmoe,gale2023megablocks,zoph2022stmoe,muzio2024seer,dai2024deepseekmoe}: FastMoE~\cite{he2021fastmoe} accelerated training by optimizing communication primitives and customized routing operators; MegaBlocks~\cite{gale2023megablocks} utilized block-sparse matrix operations to support variable-capacity routing without token-dropping overhead; ST-MoE~\cite{zoph2022stmoe} incorporated auxiliary router z-loss to stabilize routing dynamics; SEER-MoE~\cite{muzio2024seer}  introduced differentiable sparsity regularizers to refine gating distributions; and DeepSeek-MoE~\cite{dai2024deepseekmoe} advocated for finer-grained expert partitioning combined with isolated shared experts to capture generalized knowledge.

% \subsection{The Lack of Quantification for Expert Interdependence} \label{sec:expert_selection}

While the aforementioned architectural and routing innovations~\cite{shazeer2017outrageously,fedus2022switch,lepikhin2020gshard,jiang2024mixtral,zhou2022mixture,puigcerver2024sparse,lin2024scattermoe,he2021fastmoe,gale2023megablocks,zoph2022stmoe,muzio2024seer,dai2024deepseekmoe} significantly improve MoE efficiency, they share a fundamental limitation: experts are evaluated and selected in absolute isolation. Whether governed by localized gating probabilities or global sparsity regularizers, existing frameworks~\cite{zhou2022mixture,puigcerver2024sparse,lin2024scattermoe} compute selection scores independently for each individual expert. Consequently, these approaches inherently fail to distinguish between functional redundancy (i.e., experts encoding overlapping representation subspaces) and synergy (i.e., specific expert combinations providing complementary error compensation). To bridge this gap and explicitly model inter-expert relationships, a rigorous analytical framework is essential. In the broader context of neural network compression, quantifying such interdependencies via second-order (Hessian) information has a well-established theoretical foundation~\cite{lecun1989optimal,hassibi1992second}. However, adapting Hessian-aware analysis to effectively quantify the complicated combinatorial dependencies within MoE architectures remains a critical, yet unexplored frontier.

% --- B. Mathematical Proofs and Theoretical Derivations ---
\section{Mathematical Proofs and Theoretical Derivations} \label{sec:proofs}

\subsection{Derivation of the Retained Utility Function} \label{sec:utility_derivation}

Recalling the second-order Taylor expansion of the loss degradation around the full model $\mathbf{m}=\mathbf{1}$ from Eqn.~\ref{eqn:loss_expansion}, we have:
\begin{equation}\label{eqn:loss_expansion_appendix}
    \Delta\mathcal{L}(\mathbf{m}) \approx (\mathbf{m}-\mathbf{1})^{\top}\nabla_{\mathbf{m}}\mathcal{L}|_{\mathbf{m}=\mathbf{1}} + \frac{1}{2}(\mathbf{m}-\mathbf{1})^{\top}\mathbf{H}_{\mathbf{m}}|_{\mathbf{m}=\mathbf{1}}(\mathbf{m}-\mathbf{1}).
\end{equation}

Our optimization objective is to minimize this expected loss degradation, which is mathematically equivalent to maximizing the retained utility function $\mathcal{U}(\mathbf{m}) = -\Delta\mathcal{L}(\mathbf{m})$. With a slight abuse of notation, we denote the right-hand side of Eqn.~\ref{eqn:loss_expansion_appendix} by $\Delta\mathcal{L}(\mathbf{m})$ hereafter. To convert this continuous approximation into a tractable cardinality-constrained QBO problem, we expand the vector-matrix multiplications into their scalar summation forms:
\begin{equation}\label{eqn:scalar_summation}
    \Delta\mathcal{L}(\mathbf{m}) = \sum_{i=1}^{N}(m_{i}-1)[\nabla_{\mathbf{m}}\mathcal{L}|_{\mathbf{m}=\mathbf{1}}]_{i} + \frac{1}{2}\sum_{i=1}^{N}\sum_{j=1}^{N}(m_{i}-1)(m_{j}-1)[\mathbf{H}_{\mathbf{m}}|_{\mathbf{m}=\mathbf{1}}]_{ij}.
\end{equation}

By expanding the polynomial product $(m_{i}-1)(m_{j}-1) = m_{i}m_{j} - m_{i} - m_{j} + 1$ within the double summation of Eqn.~\ref{eqn:scalar_summation}, we can isolate the quadratic, linear, and constant components:
\begin{equation}\label{eqn:expend_scalar_summation}
    \Delta\mathcal{L}(\mathbf{m}) = \sum_{i=1}^{N}(m_{i}[\nabla_{\mathbf{m}}\mathcal{L}|_{\mathbf{m}=\mathbf{1}}]_{i} - [\nabla_{\mathbf{m}}\mathcal{L}|_{\mathbf{m}=\mathbf{1}}]_{i}) + \frac{1}{2}\sum_{i=1}^{N}\sum_{j=1}^{N}(m_{i}m_{j} - m_{i} - m_{j} + 1)[\mathbf{H}_{\mathbf{m}}|_{\mathbf{m}=\mathbf{1}}]_{ij}.
\end{equation}

Distributing the summations in Eqn.~\ref{eqn:expend_scalar_summation}, we obtain:
\begin{equation}\label{eqn:full_expend}
    \begin{aligned}
        \Delta\mathcal{L}(\mathbf{m}) = &\sum_{i=1}^{N}m_{i}[\nabla_{\mathbf{m}}\mathcal{L}|_{\mathbf{m}=\mathbf{1}}]_{i} - \sum_{i=1}^{N}[\nabla_{\mathbf{m}}\mathcal{L}|_{\mathbf{m}=\mathbf{1}}]_{i} + \frac{1}{2}\sum_{i=1}^{N}\sum_{j=1}^{N}m_{i}m_{j}[\mathbf{H}_{\mathbf{m}}|_{\mathbf{m}=\mathbf{1}}]_{ij} \\
        &- \frac{1}{2}\sum_{i=1}^{N}\sum_{j=1}^{N}m_{i}[\mathbf{H}_{\mathbf{m}}|_{\mathbf{m}=\mathbf{1}}]_{ij} - \frac{1}{2}\sum_{i=1}^{N}\sum_{j=1}^{N}m_{j}[\mathbf{H}_{\mathbf{m}}|_{\mathbf{m}=\mathbf{1}}]_{ij} + \frac{1}{2}\sum_{i=1}^{N}\sum_{j=1}^{N}[\mathbf{H}_{\mathbf{m}}|_{\mathbf{m}=\mathbf{1}}]_{ij}.
    \end{aligned}
\end{equation}

Notably, the empirical Fisher in Eqn.~\ref{eqn:appro_H_m} is symmetric by construction, implying $[\mathbf{H}_{\mathbf{m}}|_{\mathbf{m}=\mathbf{1}}]_{ij} = [\mathbf{H}_{\mathbf{m}}|_{\mathbf{m}=\mathbf{1}}]_{ji}$ for all $i, j \in \{1, \dots, N\}$. Consequently, the summation over the rows is mathematically identical to the summation over the columns:
\begin{equation}\label{eqn:symmetry_property}
\begin{aligned}
    \sum_{i=1}^{N}\sum_{j=1}^{N}m_{j}[\mathbf{H}_{\mathbf{m}}|_{\mathbf{m}=\mathbf{1}}]_{ij} &= \sum_{j=1}^{N}m_{j}\left(\sum_{i=1}^{N}[\mathbf{H}_{\mathbf{m}}|_{\mathbf{m}=\mathbf{1}}]_{ij}\right) = \sum_{j=1}^{N}m_{j}\left(\sum_{i=1}^{N}[\mathbf{H}_{\mathbf{m}}|_{\mathbf{m}=\mathbf{1}}]_{ji}\right) \\&= \sum_{i=1}^{N}\sum_{j=1}^{N}m_{i}[\mathbf{H}_{\mathbf{m}}|_{\mathbf{m}=\mathbf{1}}]_{ij}.
    \end{aligned}
\end{equation}

Substituting the symmetry property from Eqn.~\ref{eqn:symmetry_property} back into the expanded loss degradation in Eqn.~\ref{eqn:full_expend} helps consolidate the linear terms originating from the quadratic expansion:
\begin{equation}\label{eqn:transformed_expansion}
    \begin{aligned}
        \Delta\mathcal{L}(\mathbf{m}) = &\sum_{i=1}^{N}m_{i}[\nabla_{\mathbf{m}}\mathcal{L}|_{\mathbf{m}=\mathbf{1}}]_{i} - \sum_{i=1}^{N}[\nabla_{\mathbf{m}}\mathcal{L}|_{\mathbf{m}=\mathbf{1}}]_{i} + \frac{1}{2}\sum_{i=1}^{N}\sum_{j=1}^{N}m_{i}m_{j}[\mathbf{H}_{\mathbf{m}}|_{\mathbf{m}=\mathbf{1}}]_{ij} \\
        &- \sum_{i=1}^{N}\sum_{j=1}^{N}m_{i}[\mathbf{H}_{\mathbf{m}}|_{\mathbf{m}=\mathbf{1}}]_{ij} + \frac{1}{2}\sum_{i=1}^{N}\sum_{j=1}^{N}[\mathbf{H}_{\mathbf{m}}|_{\mathbf{m}=\mathbf{1}}]_{ij}.
    \end{aligned}
\end{equation}

Then, we factor out the binary variable $m_{i}$ from all linear terms in Eqn.~\ref{eqn:transformed_expansion} to segregate the terms dependent on $\mathbf{m}$ from the constant terms:
\begin{equation}\label{eqn:transformed_expansion_1}
    \begin{aligned}
        \Delta\mathcal{L}(\mathbf{m}) = &\sum_{i=1}^{N}m_{i}\left([\nabla_{\mathbf{m}}\mathcal{L}|_{\mathbf{m}=\mathbf{1}}]_{i} - \sum_{j=1}^{N}[\mathbf{H}_{\mathbf{m}}|_{\mathbf{m}=\mathbf{1}}]_{ij}\right) + \frac{1}{2}\sum_{i=1}^{N}\sum_{j=1}^{N}m_{i}m_{j}[\mathbf{H}_{\mathbf{m}}|_{\mathbf{m}=\mathbf{1}}]_{ij} \\
        &+ \underbrace{\left(\frac{1}{2}\sum_{i=1}^{N}\sum_{j=1}^{N}[\mathbf{H}_{\mathbf{m}}|_{\mathbf{m}=\mathbf{1}}]_{ij} - \sum_{i=1}^{N}[\nabla_{\mathbf{m}}\mathcal{L}|_{\mathbf{m}=\mathbf{1}}]_{i}\right)}_{\text{Constant } C}.
    \end{aligned}
\end{equation}

We then substitute the consolidated form Eqn.~\ref{eqn:transformed_expansion_1} back into the retained utility function $\mathcal{U}(\mathbf{m}) = -\Delta\mathcal{L}(\mathbf{m})$, reversing the signs:
\begin{equation}\label{eqn:transformed_retain_function}
    \mathcal{U}(\mathbf{m}) = \sum_{i=1}^{N}m_{i}\left(\sum_{j=1}^{N}[\mathbf{H}_{\mathbf{m}}|_{\mathbf{m}=\mathbf{1}}]_{ij} - [\nabla_{\mathbf{m}}\mathcal{L}|_{\mathbf{m}=\mathbf{1}}]_{i}\right) - \frac{1}{2}\sum_{i=1}^{N}\sum_{j=1}^{N}m_{i}m_{j}[\mathbf{H}_{\mathbf{m}}|_{\mathbf{m}=\mathbf{1}}]_{ij} - C.
\end{equation}

Since $C$ is a constant independent of the decision vector $\mathbf{m}$, shifting the objective function by this scalar does not alter the optimal $\mathbf{m}^{*}$ of the maximization problem. Thus, it can be safely discarded.

Finally, we define the individual utility $d_{i}$ to encapsulate the standalone contribution of $E_i$ toward minimizing the loss ($- {[\nabla_{\bf m}\mathcal{L}|_{\mathbf{m}=\mathbf{1}}]_i}$) and the aggregated pairwise coupling effects between $E_i$ and all other experts (${\sum_{j=1}^N [\mathbf{H}_{\bf m}|_{\mathbf{m}=\mathbf{1}}]_{ij}}$):
\begin{equation}\label{eqn:d_i_experssion}
    d_{i} = \sum_{j=1}^{N}[\mathbf{H}_{\mathbf{m}}|_{\mathbf{m}=\mathbf{1}}]_{ij} - [\nabla_{\mathbf{m}}\mathcal{L}|_{\mathbf{m}=\mathbf{1}}]_{i}.
\end{equation}

Substituting Eqn.~\ref{eqn:d_i_experssion} into Eqn.~\ref{eqn:transformed_retain_function}, we obtain the analytical form for the retained utility presented in problem~\eqref{problem:utiliy}:
\begin{equation}
    \mathcal{U}(\mathbf{m}) = \sum_{i=1}^{N}m_{i}d_{i} - \frac{1}{2}\sum_{i=1}^{N}\sum_{j=1}^{N}m_{i}m_{j}[\mathbf{H}_{\mathbf{m}}|_{\mathbf{m}=\mathbf{1}}]_{ij}.
\end{equation}

\subsection{Proof of NP-Hardness for Cardinality-Constrained QBO} \label{sec:np_hard}

We prove the NP-hardness of the considered problem by demonstrating a polynomial-time reduction from the $k$-Clique decision problem, one of Karp's 21 NP-complete problems~\citep{karp1972reducibility}.

Consider an arbitrary instance of the $k$-Clique problem defined on an undirected and unweighted graph $G = (V, E_G)$, where $V$ is the set of vertices with $|V| = N$, and $E_G \subseteq V \times V$ is the set of edges. The structure of $G$ can be algebraically represented by its symmetric adjacency matrix $\mathbf{A} \in \{0, 1\}^{N \times N}$, defined as:
\begin{equation}
    \mathbf{A}_{ij} = 
    \begin{cases} 
    1, & \text{if } (i,j) \in E_G; \\ 
    0, & \text{otherwise}. 
    \end{cases}
\end{equation}
Given a positive integer $k \le N$, the decision problem asks whether there exists a fully connected subgraph (a clique) in $G$ composed of exactly $k$ vertices.

We construct a specific instance of our expert selection problem to mathematically mirror this graph $G$. Let the total number of candidate experts be equal to the number of vertices, $N = |V|$, and set the cardinality constraint to $k$. We define the binary selection vector $\mathbf{m} \in \{0, 1\}^N$, where the scalar $m_i = 1$ indicates that vertex $i$ is selected into our target induced subgraph $V_{\mathbf{m}} \subseteq V$. This establishes the mapping:
\begin{equation}
    V_{\mathbf{m}} = \{i \in V \mid m_i = 1\}, \quad \text{s.t.} \quad |V_{\mathbf{m}}| = \mathbf{1}^\top\mathbf{m} = k.
\end{equation}

To evaluate the retained utility function $\mathcal{U}(\mathbf{m})$ derived in problem~\eqref{problem:utiliy}, we decompose its quadratic penalty into diagonal and off-diagonal components:
\begin{equation}
    \mathcal{U}(\mathbf{m}) = \sum_{i=1}^N m_i d_i - \left( \frac{1}{2} \sum_{i=1}^N m_i^2 [\mathbf{H}_{\mathbf{m}}|_{\mathbf{m}=\mathbf{1}}]_{ii} + \sum_{i < j} m_i m_j [\mathbf{H}_{\mathbf{m}}|_{\mathbf{m}=\mathbf{1}}]_{ij} \right).
\end{equation}

Since the decision variables are binary ($m_i \in \{0, 1\}$), the property $m_i^2 = m_i$ holds. Factoring in this allows us to consolidate all linear terms:
\begin{equation}
\label{eqn:consolidated_utility}
    \mathcal{U}(\mathbf{m}) = \sum_{i=1}^N m_i \left( d_i - \frac{1}{2} [\mathbf{H}_{\mathbf{m}}|_{\mathbf{m}=\mathbf{1}}]_{ii} \right) - \sum_{i < j} m_i m_j [\mathbf{H}_{\mathbf{m}}|_{\mathbf{m}=\mathbf{1}}]_{ij}.
\end{equation}

To isolate the combinatorial pairwise interactions, we first construct our problem instance such that the consolidated linear coefficient vanishes, requiring $d_i = \frac{1}{2} [\mathbf{H}_{\mathbf{m}}|_{\mathbf{m}=\mathbf{1}}]_{ii}$. Recalling the definition $d_{i}=\sum_{j=1}^{N}[\mathbf{H}_{\mathbf{m}}|_{\mathbf{m}=\mathbf{1}}]_{ij}-[\nabla_{\mathbf{m}}\mathcal{L}|_{\mathbf{m}=\mathbf{1}}]_{i}$ from Eqn.~\ref{eqn:d_i_experssion}, this is achieved by specifying the initial first-order gradient vector as
\begin{equation}\label{eq:np_grad}
    [\nabla_{\mathbf{m}} \mathcal{L}|_{\mathbf{m}=\mathbf{1}}]_i = \sum_{j=1}^N [\mathbf{H}_{\mathbf{m}}|_{\mathbf{m}=\mathbf{1}}]_{ij} - \frac{1}{2} [\mathbf{H}_{\mathbf{m}}|_{\mathbf{m}=\mathbf{1}}]_{ii} \quad \forall i \in \{1, \dots, N\}.
\end{equation}

Simultaneously, we construct the off-diagonal entries of the Hessian matrix $\mathbf{H}_{\mathbf{m}}|_{\mathbf{m}=\mathbf{1}}$ to encode the negative adjacency matrix of $G$:
\begin{equation}
    [\mathbf{H}_{\mathbf{m}}|_{\mathbf{m}=\mathbf{1}}]_{ij} = -\mathbf{A}_{ij}, \quad \forall i, j \in \{1, \dots, N\}, i \neq j.
\end{equation}

The diagonal entries of $\mathbf{H}_{\mathbf{m}}|_{\mathbf{m}=\mathbf{1}}$ do not affect the reduction because Eqn.~\ref{eq:np_grad} absorbs them into the gradient, so we set $[\mathbf{H}_{\mathbf{m}}|_{\mathbf{m}=\mathbf{1}}]_{ii} = N$. Since each row contains at most $N-1$ off-diagonal entries equal to $-1$, the constructed matrix is strictly diagonally dominant and hence positive definite. Moreover, scaling $\mathbf{H}_{\mathbf{m}}|_{\mathbf{m}=\mathbf{1}}$ and $\nabla_{\mathbf{m}}\mathcal{L}|_{\mathbf{m}=\mathbf{1}}$ by a sufficiently small constant $c > 0$ ensures $\mathbf{H}_{\mathbf{m}}|_{\mathbf{m}=\mathbf{1}} \succeq \nabla_{\mathbf{m}}\mathcal{L}|_{\mathbf{m}=\mathbf{1}} \nabla_{\mathbf{m}}\mathcal{L}|_{\mathbf{m}=\mathbf{1}}^\top$, so that the instance is realizable as the mean and second moment of per-sample gradients in Eqn.~\ref{eqn:appro_H_m}. This scaling multiplies $\mathcal{U}(\mathbf{m})$ by $c$ and thus preserves its maximizers; for clarity, we present the reduction with $c = 1$.

Constructing these matrices and gradient mappings requires $\mathcal{O}(N^2)$ operations, bounding the computational transformation within polynomial time. By substituting our exact constructions into the consolidated utility equation, the linear terms sum exactly to zero. The objective reduces to the summation of off-diagonal entries, i.e., counting the edges within the selected subgraph:
\begin{equation}\label{eq:subgraph}
    \mathcal{U}(\mathbf{m}) = - \sum_{i < j} m_i m_j (-\mathbf{A}_{ij}) = \sum_{i < j} m_i m_j \mathbf{A}_{ij} = |E_G(V_{\mathbf{m}})|.
\end{equation}

Since the induced subgraph contains exactly $k$ vertices, it has at most $\binom{k}{2}$ edges. Thus, the optimization objective in Eqn.~\ref{eq:subgraph} is upper-bounded by:
\begin{equation}
    \max_{\mathbf{m} : \mathbf{1}^\top\mathbf{m} = k} \mathcal{U}(\mathbf{m}) = \max_{|V_{\mathbf{m}}| = k} |E_G(V_{\mathbf{m}})| \le \binom{k}{2} = \frac{k(k-1)}{2}.
\end{equation}

This upper bound is attained if and only if the induced subgraph $V_{\mathbf{m}}$ is a complete graph (i.e., a $k$-clique). Consequently, the formulated QBO problem yields the optimal utility $\mathcal{U}(\mathbf{m}^*) = \frac{k(k-1)}{2}$ if and only if the original graph $G$ contains a $k$-clique. Given that the $k$-clique decision problem is NP-complete, any algorithmic problem capable of determining its solution must be at least as difficult. Because finding the exact solution to our cardinality-constrained QBO problem~\eqref{problem:utiliy} is equivalent to solving the $k$-clique problem, we therefore conclude that problem~\eqref{problem:utiliy} is NP-hard.

\subsection{Proof of Supermodularity for Decomposed Functions} \label{sec:supermodular}

To establish the supermodularity of the decomposed functions $g(\mathbf{m})$ and $f(\mathbf{m})$, we analyze their discrete marginal gains directly on the Boolean hypercube. We first formalize the condition for supermodularity within this discrete space as follows:

\textbf{Definition 1 (Supermodularity on Boolean Hypercube).} A pseudo-Boolean function $h: \{0,1\}^N \rightarrow \mathbb{R}$ is supermodular if it exhibits non-decreasing marginal returns. Formally, let $\mathbf{m}^A, \mathbf{m}^B \in \{0,1\}^N$ be two binary expert selection vectors such that $\mathbf{m}^A \le \mathbf{m}^B$ (i.e., $m^A_i \le m^B_i$ for all $i \in \{1, \dots, N\}$, implying the set of active experts in $\mathbf{m}^A$ is a subset of that in $\mathbf{m}^B$). For any inactive expert $l$ where $m^A_l = m^B_l = 0$, let $\mathbf{e}_l \in \{0,1\}^N$ denote the one-hot indicator vector for expert $l$. The supermodularity condition requires:
\begin{equation}\label{eq:supermodular_def}
    h(\mathbf{m}^A + \mathbf{e}_l) - h(\mathbf{m}^A) \le h(\mathbf{m}^B + \mathbf{e}_l) - h(\mathbf{m}^B).
\end{equation}

Following Definition 1 in \eqref{eq:supermodular_def}, we evaluate the discrete marginal gains for both the gain function $f(\mathbf{m})$ and the cost function $g(\mathbf{m})$. In the subsequent analysis, we demonstrate that both functions are supermodular and that $g(\mathbf{m})$ is additionally monotonically non-decreasing.

\textbf{Supermodularity of Cost Function $g(\mathbf{m})$.} Recall the formulation of the cost function $g(\mathbf{m})$ from Eqn.~\ref{eq:transformed_problem_eqn}:
\begin{equation}\label{eq:cost_def}
    g(\mathbf{m}) = \frac{1}{2} \sum_{i=1}^N \sum_{j=1}^N m_i m_j [\mathbf{H}_{\mathbf{m}}^+|_{\mathbf{m}=\mathbf{1}}]_{ij}.
\end{equation}
By definition of decomposition, the matrix $\mathbf{H}_{\mathbf{m}}^+$ isolates the non-negative components of the Hessian, guaranteeing that $[\mathbf{H}_{\mathbf{m}}^+|_{\mathbf{m}=\mathbf{1}}]_{ij} \ge 0$ for all $i, j$. Since the empirical Fisher in Eqn.~\ref{eqn:appro_H_m} is symmetric by construction, and the element-wise thresholding preserves this structural symmetry, we have $[\mathbf{H}_{\mathbf{m}}^+|_{\mathbf{m}=\mathbf{1}}]_{ij} = [\mathbf{H}_{\mathbf{m}}^+|_{\mathbf{m}=\mathbf{1}}]_{ji}$.

To evaluate the marginal penalty of activating a new expert $l$ (i.e., transitioning from the state $\mathbf{m}^A$ to $\mathbf{m}^A + \mathbf{e}_l$), we compute the discrete difference of the quadratic cost function in Eqn.~\ref{eq:cost_def}. Expanding this formulation with respect to the increment $\mathbf{e}_l$ yields the diagonal entry representing the intrinsic self-redundancy of expert $l$ and the off-diagonal summations that capture its cross-redundancy with the active set in $\mathbf{m}^A$:
\begin{equation}\label{eq:cost_marginal_expansion}
    g(\mathbf{m}^A + \mathbf{e}_l) - g(\mathbf{m}^A) = \frac{1}{2} \left( \sum_{j=1}^N m^A_j [\mathbf{H}_{\mathbf{m}}^+|_{\mathbf{m}=\mathbf{1}}]_{lj} + \sum_{i=1}^N m^A_i [\mathbf{H}_{\mathbf{m}}^+|_{\mathbf{m}=\mathbf{1}}]_{il} + [\mathbf{H}_{\mathbf{m}}^+|_{\mathbf{m}=\mathbf{1}}]_{ll} \right).
\end{equation}

By applying the symmetry property $[\mathbf{H}_{\mathbf{m}}^+|_{\mathbf{m}=\mathbf{1}}]_{lj} = [\mathbf{H}_{\mathbf{m}}^+|_{\mathbf{m}=\mathbf{1}}]_{jl}$ into Eqn.~\ref{eq:cost_marginal_expansion}, the two cross-term summations inside the parentheses are mathematically equivalent. Consolidating these terms yields a simplified expression for the marginal penalty:
\begin{equation}\label{eq:cost_marginal_A}
    g(\mathbf{m}^A + \mathbf{e}_l) - g(\mathbf{m}^A) = \frac{1}{2} [\mathbf{H}_{\mathbf{m}}^+|_{\mathbf{m}=\mathbf{1}}]_{ll} + \sum_{i=1, i \neq l}^N m^A_i [\mathbf{H}_{\mathbf{m}}^+|_{\mathbf{m}=\mathbf{1}}]_{il}.
\end{equation}
Since every term in Eqn.~\ref{eq:cost_marginal_A} is non-negative for any $\mathbf{m}^A$ with $m^A_l = 0$, adding an expert never decreases $g(\mathbf{m})$, i.e., $g(\mathbf{m})$ is monotonically non-decreasing.

Following the exact same algebraic expansion to the superset selection vector $\mathbf{m}^B$ ($\mathbf{m}^A \le \mathbf{m}^B$), we obtain its corresponding marginal penalty:
\begin{equation}\label{eq:cost_marginal_B}
    g(\mathbf{m}^B + \mathbf{e}_l) - g(\mathbf{m}^B) = \frac{1}{2} [\mathbf{H}_{\mathbf{m}}^+|_{\mathbf{m}=\mathbf{1}}]_{ll} + \sum_{i=1, i \neq l}^N m^B_i [\mathbf{H}_{\mathbf{m}}^+|_{\mathbf{m}=\mathbf{1}}]_{il}.
\end{equation}

To verify the supermodularity condition, we calculate the discrete difference between these two marginal gains of $\mathbf{m}^A$ and $\mathbf{m}^B$. Subtracting \eqref{eq:cost_marginal_A} from \eqref{eq:cost_marginal_B} eliminates the constant quadratic self-interaction term $\frac{1}{2} [\mathbf{H}_{\mathbf{m}}^+|_{\mathbf{m}=\mathbf{1}}]_{ll}$, leaving only the difference in the cross-redundancy summations:
\begin{equation}\label{eq:cost_marginal_diff}
    (g(\mathbf{m}^B + \mathbf{e}_l) - g(\mathbf{m}^B)) - (g(\mathbf{m}^A + \mathbf{e}_l) - g(\mathbf{m}^A)) = \sum_{i=1, i \neq l}^N (m^B_i - m^A_i) [\mathbf{H}_{\mathbf{m}}^+|_{\mathbf{m}=\mathbf{1}}]_{il}.
\end{equation}

Because $\mathbf{m}^A$ is a subset of $\mathbf{m}^B$ ($\mathbf{m}^A \le \mathbf{m}^B$), the increment $(m^B_i - m^A_i)$ is non-negative for all experts $i$. Coupled with the inherent non-negativity of the decomposed Hessian entries $([\mathbf{H}_{\mathbf{m}}^+|_{\mathbf{m}=\mathbf{1}}]_{il} \ge 0)$, their product in \eqref{eq:cost_marginal_diff} is non-negative:
\begin{equation}
    \sum_{i=1, i \neq l}^N (m^B_i - m^A_i) [\mathbf{H}_{\mathbf{m}}^+|_{\mathbf{m}=\mathbf{1}}]_{il} \ge 0.
\end{equation}
This non-negativity guarantees that $g(\mathbf{m}^A + \mathbf{e}_l) - g(\mathbf{m}^A) \le g(\mathbf{m}^B + \mathbf{e}_l) - g(\mathbf{m}^B)$, proving that the cost function $g(\mathbf{m})$ is supermodular: adding a new expert to a larger activated expert subset incurs a greater or equal redundancy penalty than adding it to a smaller subset.

\textbf{Supermodularity of Gain Function $f(\mathbf{m})$.} Next, we evaluate the gain function $f(\mathbf{m})$ formulated in Eqn.~\ref{eq:transformed_problem_eqn}, which captures both the standalone utility of individual experts and functional synergy arising from complementary interactions: 
\begin{equation}
    f(\mathbf{m}) = \sum_{i=1}^N m_i d_i + \frac{1}{2} \sum_{i=1}^N \sum_{j=1}^N m_i m_j [\mathbf{H}_{\mathbf{m}}^-|_{\mathbf{m}=\mathbf{1}}]_{ij},
    \label{eq:gain_def}
\end{equation}
where the matrix $\mathbf{H}_{\mathbf{m}}^-$ isolates the negative components of the Hessian and inverts their signs, thereby maintaining non-negativity $([\mathbf{H}_{\mathbf{m}}^-|_{\mathbf{m}=\mathbf{1}}]_{ij} \ge 0)$ while preserving structural symmetry.

To calculate the marginal gain of activating expert $l$ for the activated subset $\mathbf{m}^A$, we expand the function over the increment $\mathbf{e}_l$. The linear summation $\sum m_i d_i$ in \eqref{eq:gain_def} increases by the standalone capability $d_l$. Following the same algebraic expansion used for $g(\mathbf{m})$, the exact marginal gain evaluates to:
\begin{equation}\label{eq:gain_marginal_A}
    f(\mathbf{m}^A + \mathbf{e}_l) - f(\mathbf{m}^A) = d_l + \frac{1}{2} [\mathbf{H}_{\mathbf{m}}^-|_{\mathbf{m}=\mathbf{1}}]_{ll} + \sum_{i=1, i \neq l}^N m^A_i [\mathbf{H}_{\mathbf{m}}^-|_{\mathbf{m}=\mathbf{1}}]_{il}.
\end{equation}

By an analogous expansion for the denser selection vector $\mathbf{m}^B$ ($\mathbf{m}^A \le \mathbf{m}^B$), the corresponding marginal gain is given by:
\begin{equation} \label{eq:gain_marginal_B}
    f(\mathbf{m}^B + \mathbf{e}_l) - f(\mathbf{m}^B) = d_l + \frac{1}{2} [\mathbf{H}_{\mathbf{m}}^-|_{\mathbf{m}=\mathbf{1}}]_{ll} + \sum_{i=1, i \neq l}^N m^B_i [\mathbf{H}_{\mathbf{m}}^-|_{\mathbf{m}=\mathbf{1}}]_{il}.
\end{equation}

To verify the supermodularity condition, we evaluate the discrete difference. Notably, the standalone capability $d_l$ is a pure modular component; its marginal contribution is a fixed constant irrespective of the previously selected experts. Consequently, subtracting \eqref{eq:gain_marginal_A} from \eqref{eq:gain_marginal_B} perfectly cancels both $d_l$ and the quadratic self-interaction term, isolating the difference in the cross-synergy summations:
\begin{equation} \label{eq:gain_marginal_diff}
    (f(\mathbf{m}^B + \mathbf{e}_l) - f(\mathbf{m}^B)) - (f(\mathbf{m}^A + \mathbf{e}_l) - f(\mathbf{m}^A)) = \sum_{i=1, i \neq l}^N (m^B_i - m^A_i) [\mathbf{H}_{\mathbf{m}}^-|_{\mathbf{m}=\mathbf{1}}]_{il}.
\end{equation}

Given $\mathbf{m}^A \le \mathbf{m}^B$ and the non-negativity of the Hessian entries ($[\mathbf{H}_{\mathbf{m}}^-|_{\mathbf{m}=\mathbf{1}}]_{il} \ge 0$) in \eqref{eq:gain_marginal_diff}, this residual summation is non-negative:
\begin{equation}
    \sum_{i=1, i \neq l}^N (m^B_i - m^A_i) [\mathbf{H}_{\mathbf{m}}^-|_{\mathbf{m}=\mathbf{1}}]_{il} \ge 0.
\end{equation}
This confirms that the gain function $f(\mathbf{m})$ is supermodular.

\subsection{Derivation of the Cost Function Upper Bound} \label{sec:cost_upper}

To derive a tight surrogate upper bound $U_g(\mathbf{m}; \mathbf{m}^{(t)})$ for the cost function $g(\mathbf{m})$ (i.e., $\mathbf{m}' = \mathbf{m}^{(t)}$ in Eqn.~\ref{g_m_eq_1}), we begin with its original quadratic formulation from Eqn.~\ref{eq:transformed_problem_eqn}:
\begin{equation}\label{eq:cost_original}
g(\mathbf{m}) = \frac{1}{2}\sum_{i=1}^{N}\sum_{j=1}^{N}m_im_j[\mathbf{H}_{\mathbf{m}}^{+}|_{\mathbf{m}=\mathbf{1}}]_{ij}.
\end{equation}

To decouple the binary cross-terms, we leverage the fundamental algebraic identity $m_im_j = \frac{1}{2}(m_i^2 + m_j^2) - \frac{1}{2}(m_i - m_j)^2$. Given that the expert selection variables are binary ($m_i, m_j \in \{0, 1\}$), they satisfy the Boolean property $m_i^2 = m_i$. Substituting this into the objective facilitates reformulating the cost function into a linear component $L(\mathbf{m})$ and a concave component $C(\mathbf{m})$ via a difference-of-convex decomposition:
\begin{equation}\label{eq:dc_decomp}
g(\mathbf{m}) = \underbrace{\frac{1}{4}\sum_{i=1}^{N}\sum_{j=1}^{N}(m_i + m_j)[\mathbf{H}_{\mathbf{m}}^{+}|_{\mathbf{m}=\mathbf{1}}]_{ij}}_{\text{Linear Component } L(\mathbf{m})} + \underbrace{\left( -\frac{1}{4}\sum_{i=1}^{N}\sum_{j=1}^{N}(m_i - m_j)^2 [\mathbf{H}_{\mathbf{m}}^{+}|_{\mathbf{m}=\mathbf{1}}]_{ij} \right)}_{\text{Concave Component } C(\mathbf{m})}.
\end{equation}

We can separate the summation in the linear component $L(\mathbf{m})$ in Eqn.~\ref{eq:dc_decomp}:
\begin{equation}
L(\mathbf{m}) = \frac{1}{4}\sum_{i=1}^{N}\sum_{j=1}^{N}m_i[\mathbf{H}_{\mathbf{m}}^{+}|_{\mathbf{m}=\mathbf{1}}]_{ij} + \frac{1}{4}\sum_{i=1}^{N}\sum_{j=1}^{N}m_j[\mathbf{H}_{\mathbf{m}}^{+}|_{\mathbf{m}=\mathbf{1}}]_{ij}.
\end{equation}

By swapping the dummy indices $i$ and $j$ in the second term, and leveraging the structural symmetry of the Hessian matrix $[\mathbf{H}_{\mathbf{m}}^{+}|_{\mathbf{m}=\mathbf{1}}]_{ji} = [\mathbf{H}_{\mathbf{m}}^{+}|_{\mathbf{m}=\mathbf{1}}]_{ij}$, the linear component $L({\bf m})$ simplifies to
\begin{equation}\label{eq:linear_consolidated}
L(\mathbf{m}) = \frac{1}{2}\sum_{i=1}^{N}m_i \left( \sum_{j=1}^{N}[\mathbf{H}_{\mathbf{m}}^{+}|_{\mathbf{m}=\mathbf{1}}]_{ij} \right).
\end{equation}

Since $[\mathbf{H}_{\mathbf{m}}^{+}|_{\mathbf{m}=\mathbf{1}}]_{ij} \ge 0$, the residual component $C(\mathbf{m})$ in Eqn.~\ref{eq:dc_decomp} is a non-negative combination of the concave functions $-(m_i - m_j)^2$, and hence concave on $\mathbb{R}^N$. Taking the partial derivative with respect to $m_l$ yields
\begin{align}
\frac{\partial C(\mathbf{m})}{\partial m_l} &= -\frac{1}{4} \left[ \sum_{j=1}^{N} 2(m_l - m_j)[\mathbf{H}_{\mathbf{m}}^{+}|_{\mathbf{m}=\mathbf{1}}]_{lj} + \sum_{i=1}^{N} 2(m_i - m_l)(-1)[\mathbf{H}_{\mathbf{m}}^{+}|_{\mathbf{m}=\mathbf{1}}]_{il} \right] \\
&= -\frac{1}{4} \left[ \sum_{j=1}^{N} 2(m_l - m_j)[\mathbf{H}_{\mathbf{m}}^{+}|_{\mathbf{m}=\mathbf{1}}]_{lj} + \sum_{i=1}^{N} 2(m_l - m_i)[\mathbf{H}_{\mathbf{m}}^{+}|_{\mathbf{m}=\mathbf{1}}]_{li} \right].
\end{align}

By applying the symmetry property $[\mathbf{H}_{\mathbf{m}}^{+}|_{\mathbf{m}=\mathbf{1}}]_{il} = [\mathbf{H}_{\mathbf{m}}^{+}|_{\mathbf{m}=\mathbf{1}}]_{li}$, the two summations merge, yielding the general partial derivative with respect to $m_l$:
\begin{equation}\label{eq:concave_gradient}
\frac{\partial C(\mathbf{m})}{\partial m_l} = - \sum_{j=1}^N [\mathbf{H}_{\mathbf{m}}^{+}|_{\mathbf{m}=\mathbf{1}}]_{lj} (m_l - m_j).
\end{equation}

Since a concave function lies below its tangent plane, $C(\mathbf{m})$ is globally upper-bounded by its first-order Taylor expansion at $\mathbf{m}^{(t)}$:
\begin{equation}\label{eq:expansion_talor}
C(\mathbf{m}) \le C(\mathbf{m}^{(t)}) + \sum_{i=1}^{N} [\nabla_{\mathbf{m}} C(\mathbf{m}^{(t)})]_i (m_i - m_i^{(t)}).
\end{equation}

Substituting the gradient from Eqn.~\ref{eq:concave_gradient} evaluated at $\mathbf{m}^{(t)}$ into Eqn.~\ref{eq:expansion_talor} and extracting the terms dependent on $m_i$, we obtain:
\begin{equation}\label{eq:taylor_substituted}
C(\mathbf{m}) \le \sum_{i=1}^{N} m_i \left[ -\sum_{j=1}^{N} [\mathbf{H}_{\mathbf{m}}^{+}|_{\mathbf{m}=\mathbf{1}}]_{ij} (m_i^{(t)} - m_j^{(t)}) \right] + C_c^{(t)},
\end{equation}
where the state-dependent constant $C_c^{(t)} = C(\mathbf{m}^{(t)}) - \sum_{i=1}^{N} m_i^{(t)} [\nabla_{\mathbf{m}} C(\mathbf{m}^{(t)})]_i$ can be simplified as
\begin{equation}\label{eq:constant_simplify}
\begin{aligned}
C_c^{(t)} &= -\frac{1}{4}\sum_{i=1}^{N}\sum_{j=1}^{N}(m_i^{(t)} - m_j^{(t)})^2[\mathbf{H}_{\mathbf{m}}^{+}|_{\mathbf{m}=\mathbf{1}}]_{ij} + \sum_{i=1}^{N}\sum_{j=1}^{N} m_i^{(t)}(m_i^{(t)} - m_j^{(t)})[\mathbf{H}_{\mathbf{m}}^{+}|_{\mathbf{m}=\mathbf{1}}]_{ij} \\
&= -\frac{1}{4}\sum_{i=1}^{N}\sum_{j=1}^{N}(m_i^{(t)} - m_j^{(t)})^2[\mathbf{H}_{\mathbf{m}}^{+}|_{\mathbf{m}=\mathbf{1}}]_{ij} + \frac{1}{2}\sum_{i=1}^{N}\sum_{j=1}^{N}(m_i^{(t)} - m_j^{(t)})^2[\mathbf{H}_{\mathbf{m}}^{+}|_{\mathbf{m}=\mathbf{1}}]_{ij} \\
&= \frac{1}{4}\sum_{i=1}^{N}\sum_{j=1}^{N}(m_i^{(t)} - m_j^{(t)})^2[\mathbf{H}_{\mathbf{m}}^{+}|_{\mathbf{m}=\mathbf{1}}]_{ij} \triangleq C_g^{(t)}.
\end{aligned}
\end{equation}
Here, the first equality substitutes the definition of $C(\mathbf{m}^{(t)})$ and the gradient in Eqn.~\ref{eq:concave_gradient}, and the second equality uses $\sum_{i}\sum_{j} m_i^{(t)}(m_i^{(t)} - m_j^{(t)})[\mathbf{H}_{\mathbf{m}}^{+}|_{\mathbf{m}=\mathbf{1}}]_{ij} = \frac{1}{2}\sum_{i}\sum_{j}(m_i^{(t)} - m_j^{(t)})^2[\mathbf{H}_{\mathbf{m}}^{+}|_{\mathbf{m}=\mathbf{1}}]_{ij}$, obtained by swapping $i$ and $j$ and averaging under symmetry.

Finally, substituting the consolidated linear component $L(\mathbf{m})$ from Eqn.~\ref{eq:linear_consolidated} and the bounded concave component from Eqns.~\eqref{eq:taylor_substituted}--\eqref{eq:constant_simplify} into the decomposition $g(\mathbf{m}) = L(\mathbf{m}) + C(\mathbf{m})$, and merging the coefficients of each $m_i$, we obtain the following upper bound:
\begin{equation}\label{eq:final_upper_bound}
\begin{aligned}
g(\mathbf{m}) &\le \sum_{i=1}^{N} m_i \left[ \frac{1}{2}\sum_{j=1}^N [\mathbf{H}_{\mathbf{m}}^{+}|_{\mathbf{m}=\mathbf{1}}]_{ij} - \sum_{j=1}^N [\mathbf{H}_{\mathbf{m}}^{+}|_{\mathbf{m}=\mathbf{1}}]_{ij}\left(m_i^{(t)} - m_j^{(t)}\right) \right] + C_g^{(t)} \\
&= \sum_{i=1}^{N} m_i \left[ \sum_{j=1}^N [\mathbf{H}_{\mathbf{m}}^{+}|_{\mathbf{m}=\mathbf{1}}]_{ij} \left( \frac{1}{2} - m_i^{(t)} + m_j^{(t)} \right) \right] + C_g^{(t)} \triangleq U_g(\mathbf{m}; \mathbf{m}^{(t)}),
\end{aligned}
\end{equation}
which holds for every binary $\mathbf{m}$ and coincides with Eqn.~\ref{g_m_eq_1}. Since the Taylor expansion in Eqn.~\ref{eq:expansion_talor} is exact at $\mathbf{m} = \mathbf{m}^{(t)}$, the bound is tight, i.e., $U_g(\mathbf{m}^{(t)}; \mathbf{m}^{(t)}) = g(\mathbf{m}^{(t)})$.

\subsection{Proof of Cardinality-Constrained Linear Maximization Problem} \label{sec:card_constrain}

To establish the optimization procedure at the $(t+1)$-th iteration of the MM algorithm, our objective is to maximize the tractable modular surrogate function $\mathcal{Q}(\mathbf{m}; \mathbf{m}^{(t)})$, subject to the cardinality constraint $\mathbf{1}^\top\mathbf{m} = k$. The surrogate function $\mathcal{Q}(\mathbf{m}; \mathbf{m}^{(t)})$ is constructed by subtracting the upper bound of the cost function $U_g(\mathbf{m}; \mathbf{m}^{(t)})$ from the linearization of the gain function $L_f(\mathbf{m}; \mathbf{m}^{(t)})$:
\begin{equation}
    \mathcal{Q}(\mathbf{m}; \mathbf{m}^{(t)}) = L_f(\mathbf{m}; \mathbf{m}^{(t)}) - U_g(\mathbf{m}; \mathbf{m}^{(t)}).
    \label{eq:surrogate_def_app}
\end{equation}

Recall the analytical expressions for these two functions in Eqn.~\ref{f_m_eq_1} and Eqn.~\ref{g_m_eq_1} with $\mathbf{m}' = \mathbf{m}^{(t)}$. The first-order Taylor expansion provides the linearization $L_f(\mathbf{m}; \mathbf{m}^{(t)})$ of the gain function $f(\mathbf{m})$ around the state $\mathbf{m}^{(t)}$:
\begin{equation}
    L_f(\mathbf{m}; \mathbf{m}^{(t)}) = f(\mathbf{m}^{(t)}) + \sum_{i=1}^N [\nabla_{\mathbf{m}}f(\mathbf{m}^{(t)})]_i (m_i - m_i^{(t)})
    \label{eq:lower_bound_gain},
\end{equation}
and the upper bound $U_g(\mathbf{m}; \mathbf{m}^{(t)})$ for the cost function $g(\mathbf{m})$ is given by
\begin{equation}
    U_g(\mathbf{m}; \mathbf{m}^{(t)}) = \sum_{i=1}^N m_i \left[ \sum_{j=1}^N [\mathbf{H}^+_{\mathbf{m}}|_{\mathbf{m}=\mathbf{1}}]_{ij} \left( \frac{1}{2} - m^{(t)}_i + m^{(t)}_j \right) \right] + C^{(t)}_g. 
    \label{eq:upper_bound_cost}
\end{equation}

We substitute these exact expressions from Eqn.~\ref{eq:lower_bound_gain} and Eqn.~\ref{eq:upper_bound_cost} into the surrogate function $\mathcal{Q}(\mathbf{m}; \mathbf{m}^{(t)})$ in Eqn.~\ref{eq:surrogate_def_app}, yielding 
\begin{align}
   \mathcal{Q}(\mathbf{m};\mathbf{m}^{(t)}) &= \left[ f(\mathbf{m}^{(t)}) + \sum_{i=1}^N [\nabla_{\mathbf{m}} f(\mathbf{m}^{(t)})]_i (m_i - m_i^{(t)}) \right] \nonumber \\&- \left[ \sum_{i=1}^N m_i \left( \sum_{j=1}^N [\mathbf{H}^+_{\mathbf{m}}|_{\mathbf{m}=\mathbf{1}}]_{ij} \left( \frac{1}{2} - m^{(t)}_i + m^{(t)}_j \right) \right) + C^{(t)}_g \right]. 
    \label{eq:surrogate_subbed}
\end{align}

After that, we expand the inner product associated with the gradient term in the linearization $L_f(\mathbf{m}; \mathbf{m}^{(t)})$, and distribute the gradient $[\nabla_{\mathbf{m}}f(\mathbf{m}^{(t)})]_i$ across the difference $m_i - m_i^{(t)}$:
\begin{equation}
    \sum_{i=1}^N [\nabla_{\mathbf{m}}f(\mathbf{m}^{(t)})]_i (m_i - m_i^{(t)}) = \sum_{i=1}^N [\nabla_{\mathbf{m}}f(\mathbf{m}^{(t)})]_i m_i - \sum_{i=1}^N [\nabla_{\mathbf{m}}f(\mathbf{m}^{(t)})]_i m_i^{(t)}.
    \label{eq:gradient_expansion}
\end{equation}

Substituting the expanded form from Eqn.~\ref{eq:gradient_expansion} into our surrogate function in Eqn.~\ref{eq:surrogate_subbed} yields:
\begin{align}
  \mathcal{Q}(\mathbf{m};\mathbf{m}^{(t)}) &= f(\mathbf{m}^{(t)}) + \sum_{i=1}^N [\nabla_{\mathbf{m}} f(\mathbf{m}^{(t)})]_i m_i - \sum_{i=1}^N [\nabla_{\mathbf{m}} f(\mathbf{m}^{(t)})]_i m_i^{(t)} \nonumber \\&- \sum_{i=1}^N m_i \left( \sum_{j=1}^N [\mathbf{H}^+_{\mathbf{m}}|_{\mathbf{m}=\mathbf{1}}]_{ij} \left( \frac{1}{2} - m^{(t)}_i + m^{(t)}_j \right) \right) - C^{(t)}_g.
    \label{eq:surrogate_expanded}
\end{align}

Next, we regroup the terms in Eqn.~\ref{eq:surrogate_expanded} to segregate those dependent on the decision variable $\bf m$ from those that are constant at iteration $t$:
\begin{align}
    \mathcal{Q}(\mathbf{m};\mathbf{m}^{(t)}) &= \underbrace{ \sum_{i=1}^N [\nabla_{\mathbf{m}} f(\mathbf{m}^{(t)})]_i m_i - \sum_{i=1}^N m_i \left( \sum_{j=1}^N [\mathbf{H}^+_{\mathbf{m}}|_{\mathbf{m}=\mathbf{1}}]_{ij} \left( \frac{1}{2} - m^{(t)}_i + m^{(t)}_j \right) \right) }_{\text{Terms dependent on } \mathbf{m}} \nonumber \\&+ \underbrace{ f(\mathbf{m}^{(t)}) - \sum_{i=1}^N [\nabla_{\mathbf{m}} f(\mathbf{m}^{(t)})]_i m_i^{(t)} - C^{(t)}_g }_{\text{Constant terms independent of } \mathbf{m}}. 
    \label{eq:surrogate_grouped}
\end{align}

Let $C^{(t)}$ incorporate all terms from Eqn.~\ref{eq:surrogate_grouped} that are independent of the decision variable $\mathbf{m}$:
\begin{equation}
    C^{(t)} = f(\mathbf{m}^{(t)}) - \sum_{i=1}^N [\nabla_{\mathbf{m}} f(\mathbf{m}^{(t)})]_i m_i^{(t)} - C^{(t)}_g.
    \label{eq:constant_c}
\end{equation}

Substituting the constant $C^{(t)}$ from Eqn.~\ref{eq:constant_c} into Eqn.~\ref{eq:surrogate_grouped} simplifies the formulation:
\begin{equation}
    \mathcal{Q}(\mathbf{m};\mathbf{m}^{(t)}) = \sum_{i=1}^N [\nabla_{\mathbf{m}} f(\mathbf{m}^{(t)})]_i m_i - \sum_{i=1}^N m_i \left( \sum_{j=1}^N [\mathbf{H}^+_{\mathbf{m}}|_{\mathbf{m}=\mathbf{1}}]_{ij} \left( \frac{1}{2} - m^{(t)}_i + m^{(t)}_j \right) \right) + C^{(t)}. 
    \label{eq:surrogate_simplified_c}
\end{equation}

Factoring out the binary variable $m_i$ from the remaining summations in Eqn.~\ref{eq:surrogate_simplified_c} isolates the marginal contribution of each expert $E_i$: 
\begin{equation}
    \mathcal{Q}(\mathbf{m};\mathbf{m}^{(t)}) = \sum_{i=1}^N m_i \left( [\nabla_{\mathbf{m}} f(\mathbf{m}^{(t)})]_i - \sum_{j=1}^N [\mathbf{H}^+_{\mathbf{m}}|_{\mathbf{m}=\mathbf{1}}]_{ij} \left( \frac{1}{2} - m^{(t)}_i + m^{(t)}_j \right) \right) + C^{(t)}.
    \label{eq:surrogate_factored}
\end{equation}

We define the importance score $S_i^{(t)}$ for the $i$-th expert, which quantifies the trade-off between the expert's synergy gain and its redundancy penalty:
\begin{equation}
    S_i^{(t)} = [\nabla_{\mathbf{m}} f(\mathbf{m}^{(t)})]_i - \sum_{j=1}^N [\mathbf{H}^+_{\mathbf{m}}|_{\mathbf{m}=\mathbf{1}}]_{ij} \left( \frac{1}{2} - m^{(t)}_i + m^{(t)}_j \right).
    \label{eq:score_def}
\end{equation}

By substituting $S_i^{(t)}$ from Eqn.~\ref{eq:score_def} into Eqn.~\ref{eq:surrogate_factored}, the complex surrogate function is reduced to an affine form:
\begin{equation}
    \mathcal{Q}(\mathbf{m}; \mathbf{m}^{(t)}) = \sum_{i=1}^N m_i S_i^{(t)} + C^{(t)}.
    \label{eq:surrogate_affine}
\end{equation}

The iterative update step of the MM algorithm requires finding $\mathbf{m}^{(t+1)} = \arg\max_{\mathbf{m}} \mathcal{Q}(\mathbf{m}; \mathbf{m}^{(t)})$, subject to the cardinality constraint $\mathbf{1}^\top\mathbf{m} = k$. Since $C^{(t)}$ acts as a uniform scalar shift across the objective landscape, it does not alter the location of the maximum. Thus, maximizing $\mathcal{Q}(\mathbf{m}; \mathbf{m}^{(t)})$ in Eqn.~\ref{eq:surrogate_affine} is equivalent to maximizing the linear summation alone:
\begin{equation}
    \mathop{\arg\max}_{\mathbf{m}\in\{0,1\}^N:\,\mathbf{1}^\top\mathbf{m} = k} \left( \sum_{i=1}^N m_i S_i^{(t)} + C^{(t)} \right) = \mathop{\arg\max}_{\mathbf{m}\in\{0,1\}^N:\,\mathbf{1}^\top\mathbf{m} = k} \sum_{i=1}^N m_i S_i^{(t)}.
    \label{eq:argmax_equivalence}
\end{equation}
This yields the update rule in Eqn.~\ref{eq:mm_update}, whose exact solution is obtained by selecting the $k$ experts with the largest scores $S_i^{(t)}$ via Top-$k$ sorting.

\subsection{Proof of Convergence for the MM Algorithm} \label{sec:mm_convergence}

To prove the convergence property of the MM algorithm presented in Theorem~\ref{thm:convergence}, we demonstrate that the sequence of the retained utility values $\{\mathcal{U}(\mathbf{m}^{(t)})\}_{t \ge 1}$ generated by the iterative updates is monotonically non-decreasing and strictly increases whenever the selection changes. The MM algorithm operates by sequentially maximizing the surrogate function $\mathcal{Q}(\mathbf{m}; \mathbf{m}^{(t)})$ constructed in Eqn.~\ref{eqn:surro_func}, which satisfies the following two properties.

First, the surrogate coincides with the true objective at the current anchor state $\mathbf{m} = \mathbf{m}^{(t)}$. For the linearization in Eqn.~\ref{f_m_eq_1}, the deviation terms $(m_i - m_i^{(t)})$ vanish at the anchor state, so that $L_f(\mathbf{m}^{(t)};\mathbf{m}^{(t)}) = f(\mathbf{m}^{(t)})$. For the upper bound in Eqn.~\ref{g_m_eq_1}, the tangent expansion of the concave component is exact at the anchor state (Appendix~\ref{sec:cost_upper}), so that $U_g(\mathbf{m}^{(t)}; \mathbf{m}^{(t)}) = g(\mathbf{m}^{(t)})$. Subtracting these yields the tightness condition:
\begin{equation}
    \mathcal{Q}(\mathbf{m}^{(t)}; \mathbf{m}^{(t)}) = \mathcal{U}(\mathbf{m}^{(t)}).
    \label{eq:surrogate_tightness}
\end{equation}

Second, during the update phase defined by the cardinality-constrained linear maximization in Eqn.~\ref{eq:mm_update}, the algorithm selects the candidate $\mathbf{m}^{(t+1)}$ as the exact maximizer of the surrogate function over the feasible region, which contains $\mathbf{m}^{(t)}$ for $t \ge 1$. By the definition of a maximizer, the surrogate value cannot degrade:
\begin{equation}
    \mathcal{Q}(\mathbf{m}^{(t+1)}; \mathbf{m}^{(t)}) \ge \mathcal{Q}(\mathbf{m}^{(t)}; \mathbf{m}^{(t)}).
    \label{eq:surrogate_improvement}
\end{equation}

\begin{algorithm}[t!]
    \caption{MM-based Expert Selection}
    \label{alg:mm}
    \footnotesize
    \Input{Hessian matrix $\mathbf{H}_{\bf m}|_{\bf m=1} \in \mathbb{R}^{N \times N}$, utility vector $\mathbf{d} \in \mathbb{R}^N$, sparsity $k$, max iterations $T_{\max}$}
    \Output{Selected expert subset $\mathbf{m}^* \in \{0,1\}^N$ s.t. $\mathbf{1}^\top\mathbf{m}^* = k$}
    \BlankLine
    // \texttt{Step 1: Initialization}\\
    $t \gets 0, \quad \mathbf{m}^{(0)} \gets \mathbf{0}$ \Comment*[r]{Start with empty selection}
    
    \BlankLine
    // \texttt{Step 2: Hessian Decomposition}\\
    $[\mathbf{H}_{\bf m}^+]_{ij} \gets \max(0, [\mathbf{H}_{\bf m}|_{\bf m=1}]_{ij})$ \Comment*[r]{Redundancy component}
    $[\mathbf{H}_{\bf m}^-]_{ij} \gets \max(0, -[\mathbf{H}_{\bf m}|_{\bf m=1}]_{ij})$ \Comment*[r]{Synergy component}
    
    \BlankLine
    // \texttt{Step 3: MM Optimization}\\
    \While{$t < T_{\max}$}{
        \tcp*[l]{Construct Surrogate Function}
        \For{$i = 1$ \KwTo $N$}{
            $[\nabla_{\bf m} f(\mathbf{m}^{(t)})]_i \gets d_i + \sum_{j=1}^N [\mathbf{H}_{\bf m}^-]_{ij} m_j^{(t)}$ \Comment*[r]{Marginal synergy gain}
            $p_i^{(t)} \gets \sum_{j=1}^N [\mathbf{H}_{\bf m}^+]_{ij} \left(\frac{1}{2} - m_i^{(t)} + m_j^{(t)}\right)$ \Comment*[r]{Context-aware redundancy penalty}
            $S_i^{(t)} \gets [\nabla_{\bf m} f(\mathbf{m}^{(t)})]_i - p_i^{(t)}$ \Comment*[r]{Importance score}
        }
        
        \BlankLine
        \tcp*[l]{Update Selection via Top-$k$ Projection}
        $\mathbf{m}^{(t+1)} \gets \text{Top-$k$}(\mathbf{S}^{(t)}, k)$\;
        
        \BlankLine
        \If{$t \ge 1$ \textbf{and} $\mathcal{U}(\mathbf{m}^{(t+1)}) \le \mathcal{U}(\mathbf{m}^{(t)})$}{
            \textbf{break} \Comment*[r]{Reject non-improving update}
        }
        \If{$\mathbf{m}^{(t+1)} = \mathbf{m}^{(t)}$}{
            \textbf{break} \Comment*[r]{Selection converged}
        }
        $t \gets t + 1$\;
    }
    \Return $\mathbf{m}^* \gets \mathbf{m}^{(t)}$\;
\end{algorithm}

However, since $\mathcal{Q}(\mathbf{m}; \mathbf{m}^{(t)})$ does not necessarily lower-bound $\mathcal{U}(\mathbf{m})$, Eqns.~\eqref{eq:surrogate_tightness}--\eqref{eq:surrogate_improvement} only imply $\mathcal{Q}(\mathbf{m}^{(t+1)}; \mathbf{m}^{(t)}) \ge \mathcal{U}(\mathbf{m}^{(t)})$ and do not by themselves guarantee $\mathcal{U}(\mathbf{m}^{(t+1)}) \ge \mathcal{U}(\mathbf{m}^{(t)})$. The acceptance rule in Algorithm~\ref{alg:mm} closes this gap: for every $t \ge 1$, the candidate $\mathbf{m}^{(t+1)}$ is retained only if $\mathcal{U}(\mathbf{m}^{(t+1)}) > \mathcal{U}(\mathbf{m}^{(t)})$; otherwise, the algorithm terminates and the sequence remains at $\mathbf{m}^{(t)}$. Consequently, the sequence of retained utilities satisfies
\begin{equation}
    \mathcal{U}(\mathbf{m}^{(t+1)}) \ge \mathcal{U}(\mathbf{m}^{(t)}), \quad \forall t \ge 1,
    \label{eq:monotonicity_final}
\end{equation}
where the inequality is strict whenever $\mathbf{m}^{(t+1)} \neq \mathbf{m}^{(t)}$, which establishes Eqn.~\ref{eq:U_m_inequlity}. Because $\mathcal{U}$ strictly increases along every change of selection, no feasible selection can be visited twice. Since the feasible set $\{\mathbf{m} \in \{0,1\}^N : \mathbf{1}^\top\mathbf{m} = k\}$ contains $\binom{N}{k}$ elements, the algorithm terminates after at most $\binom{N}{k}$ changes of selection, which completes the proof.

\subsection{The Complete Majorization-Minimization Algorithm} \label{sec:mm_algo}

Algorithm~\ref{alg:mm} outlines the complete procedure of the proposed MM optimization for expert selection. The algorithm initializes the iteration counter $t=0$ and sets the starting selection vector $\mathbf{m}^{(0)}$ to an empty state $\mathbf{0}$ ({\texttt{Line 2}}). Next, the Hessian matrix $\mathbf{H}_{\mathbf{m}}|_{\mathbf{m}=\mathbf{1}}$ is decomposed into non-negative functional redundancy component $\mathbf{H}_{\mathbf{m}}^+$ and complementary synergy component $\mathbf{H}_{\mathbf{m}}^-$ ({\texttt{Lines 4-5}}). Inside the MM optimization loop, the algorithm dynamically calculates the marginal synergy gain $[\nabla_{\mathbf{m}} f(\mathbf{m}^{(t)})]_i$ for each candidate expert relative to the active subset $\mathbf{m}^{(t)}$ selected in the preceding iteration ({\texttt{Line 9}}). Simultaneously, it computes a context-aware redundancy penalty $p_i^{(t)}$ based on the current selection state to evaluate functional overlap ({\texttt{Line 10}}). By subtracting the redundancy penalty from the marginal synergy gain, the algorithm formulates the composite importance score ${S}_i^{(t)}$ ({\texttt{Line 11}}). The candidate selection vector $\mathbf{m}^{(t+1)}$ is then determined via a Top-$k$ projection of these evaluated scores ({\texttt{Line 12}}). For $t \ge 1$, the candidate is accepted only if it strictly increases the retained utility $\mathcal{U}(\mathbf{m})$, which is evaluated exactly with $\mathcal{O}(N^2)$ cost; otherwise, the loop terminates ({\texttt{Lines 13-14}}). Finally, the algorithm enforces a termination condition: if the newly selected subset precisely matches the previous one ($\mathbf{m}^{(t+1)} = \mathbf{m}^{(t)}$), the optimization has reached a stable state, prompting the loop to break and return the selected expert subset $\mathbf{m}^*$ ({\texttt{Lines 15-18}}).

\subsection{Datasets and Evaluation Metrics} \label{sec:datasets}
% [cite: 350, 351, 585] 补充 AGNews, PIQA, HellaSwag, MMLU 的详细描述。

\textbf{Datasets.} To evaluate the proposed DS-MoE framework against the state-of-the-art routing baselines, we conduct comprehensive empirical evaluations across four widely adopted datasets spanning a spectrum from fundamental semantic comprehension to advanced, knowledge-intensive reasoning. The selected benchmarks evaluate the MoE architectures across varying degrees of complexity and domain heterogeneity:
\begin{itemize}
    \item \textbf{AGNews (Natural Language Understanding)}~\cite{zhang2015character}: A fundamental benchmark for text classification comprising news articles categorized into four topical domains (World, Sports, Business, Sci/Tech). We utilize this dataset to evaluate the model's capacity to capture linearly separable semantic representations,  verifying whether the routing mechanism can distinctly allocate experts to non-overlapping topics without representation collapse.
    \item \textbf{PIQA (Physical Interaction Question Answering)}~\cite{bisk2020piqa}: A physical commonsense reasoning benchmark requiring the deduction of the most plausible continuation of a scenario governed by real-world physical laws. This task challenges the MoE framework to dynamically activate expert subsets specialized in intuitive physics and spatial-temporal logic, extending significantly beyond static factoid retrieval.
    \item \textbf{HellaSwag (Predictive Reasoning)}~\cite{zellers2019hellaswag}: A challenging dataset for evaluating grounded, adversarial natural language reasoning. Because the incorrect options in HellaSwag are adversarially generated by machines to be structurally coherent but logically flawed, this benchmark explicitly stresses the MoE's ability to disentangle spurious correlations, explicitly highlighting the necessity of synergistic expert combinations (error cancellation) to effectively resolve complex logical reasoning.
    \item \textbf{MMLU (Massive Multitask Language Understanding)}~\cite{hendrycksmeasuring}: A comprehensive benchmark designed to evaluate zero-shot and few-shot knowledge-intensive reasoning across 57 distinct academic and professional disciplines (e.g., STEM, humanities, social sciences). Given its extreme domain heterogeneity, MMLU is critical for stress-testing the MoE architecture. It validates the MoE framework's capacity to retain a diverse expert pool and orchestrate combinatorial expert collaboration, while effectively mitigating catastrophic forgetting and negative transfer.  
\end{itemize}

\textbf{Evaluation Metrics.} To capture both the final generalization capability and the underlying optimization dynamics, we assess the models using two primary quantitative metrics:
\begin{itemize}
    \item \textbf{Test Accuracy (\%):} The standard Top-1 accuracy measured on the unseen test splits across all evaluated tasks. As the primary proxy for generalization efficacy, higher test accuracy provides direct empirical validation that the proposed expert selection successfully preserves indispensable, synergistic expert representations while discarding redundant ones.
    \item \textbf{Convergence Efficiency (Rounds):} Measured by the number of training iterations required for the validation performance to reach a stationary accuracy threshold. Standard Top-$k$ routing often suffers from slow convergence or oscillatory behavior due to conflicting gradient updates from functionally redundant experts. By tracking the optimization trajectory, this metric quantitatively demonstrates how the proposed framework's mitigation of redundancy translates to accelerated gradient flow and superior training stability.
\end{itemize}

\subsection{Model Architectures and Hyperparameter Settings} \label{sec:hyperparams}

To conduct the evaluation of the DS-MoE framework, we benchmark our approach against three representative MoE architectures: Switch-Base-32, Switch-Base-64, and DeepSeek-MoE-16B. The detailed model architecture and experimental configurations are outlined below.

\paragraph{Model Architectures}
\begin{itemize}
    \item \textbf{Switch-Base Variants ($N \in \{32, 64\}$):} Both Switch-Base-32 and Switch-Base-64 models adopt a symmetric sequence-to-sequence architecture, comprising 12 encoder layers and 12 decoder layers. We uniformly set the hidden dimension to $d_{\text{model}} = 768$ and the vocabulary size to 32,128. These variants employ an alternating arrangement of dense and sparse layers: specifically, half of the Transformer blocks utilize standard dense feed-forward networks (FFN) with an expansion dimension of 3,072, while the remaining blocks are replaced by sparse MoE modules integrating 32 (resp. Switch-Base-32) and 64 (resp. Switch-Base-64) experts. To ensure computational equivalence for a single forward pass, the internal expansion dimension of all individual experts is strictly locked at 3,072, ensuring parity with the standard dense FFN layers.
    \item \textbf{DeepSeek-MoE-16B ($N = 64$):} We instantiate the DeepSeek-MoE-16B as a 28-layer decoder-only autoregressive architecture. We set the hidden dimension $d_{\text{model}} = 2,048$ and the vocabulary size to 102,400. To stabilize shallow-layer feature representations, the foundational layer incorporates a standard dense FFN with an intermediate dimension of 10,944. The subsequent 27 layers are configured as MoE modules. Within each MoE layer, we deploy a hybrid expert configuration consisting of 64 independent routed experts with an individual dimension of 1,408.   
\end{itemize}

\paragraph{Experimental Configurations}
\begin{itemize}
    \item \textbf{Optimization and Parameter Updating Strategy:} All experiments are executed using bfloat16 mixed precision and the 8-bit AdamW optimizer. The learning rate is uniformly set to $1 \times 10^{-4}$ across all tasks. We configure a micro-batch size of 4 coupled with 4 gradient accumulation steps, yielding a batch size of 16. During fine-tuning, only the sparse components (expert networks and routers) and task-specific heads are updated, while the dense backbone remains frozen.
    \item \textbf{Task-Specific Setup:} We adapt the maximum sequence length based on the inherent context requirements of each specific benchmark. Specifically, the lengths are configured to 128 tokens for AGNews; 256 tokens for PIQA; and 512 tokens for HellaSwag and MMLU. During the evaluation of generative multiple-choice tasks (such as HellaSwag and MMLU), the generation configuration is strictly bounded to a maximum of 4 new tokens.
\end{itemize}

{\bf{Specific Configurations for MM Algorithm}.}
The proposed MM expert selection framework introduces several algorithm-specific hyperparameters. We set the maximum iteration threshold of $T_{\text{max}} = 10$ and active expert capacity $k = 2$ by default unless
specified otherwise.

{\bf{Hardware.}} All experiments are conducted on a high-performance central server equipped with NVIDIA GeForce RTX 4090 GPUs.

%%%%%-----------------------------------------------------------

%%%%%-----------------------------------------------------------
% --- D. Limitations and Future Work ---
\section{Limitations and Future Work} \label{sec:limitations}

While the DS-MoE framework establishes a rigorous mathematical foundation for optimizing expert combinations, our current analysis confines the modeling of combinatorial dependencies to intra-layer interactions. Given the sequential feature transformations inherent in deep language models, capturing inter-layer dynamics presents a critical avenue for exploration. A promising direction for future research involves investigating cross-layer expert synergies, such as approximating a global, block-diagonal Hessian proxy, to enable holistic, network-wide routing strategies that further maximize overall parameter efficiency. Moreover, the proposed framework can be naturally extended to distributed machine learning scenarios, such as federated learning~\cite{zhang2024satfed,lin2024fedsn,teng2026communication,fang2024automated,hu2024accelerating,qiu2024ifvit} or split learning~\cite{lin2024adaptsfl,zhang2025s,wei2025optimizing,lin2025hasfl,lyu2023optimal,fang2026nsc,lin2026gapsl,lyu2023optimal}, where expert selection can be jointly optimized with communication and computation constraints to improve both parameter efficiency and system efficiency.

\section{Practical Impact} \label{sec:practical_impact}

By bridging rigorous theoretical guarantees with expert selection optimizations, the DS-MoE framework addresses critical bottlenecks in the training and deployment of large-scale MoE architectures:
\begin{itemize}
    \item \textbf{Theoretically Grounded Routing Paradigm:} The training of industrial-scale MoE models is impeded by routing collapse and optimization instabilities, necessitating extensive hyperparameter tuning. By introducing an MM framework with guaranteed monotonic convergence, DS-MoE circumvents the heuristic unpredictability inherent in standard Top-$k$ routing, thereby enhancing the model performance. 
    \item \textbf{Facilitating Deployment in Resource-Constrained Edge Systems:} By optimizing representational capacity under strict cardinality constraints (e.g., a predefined expert budget $k$), DS-MoE drastically reduces the memory footprint and computing requirements for training and deploying LLMs. This enables resource-constrained computing entities to leverage the state-of-the-art MoE architectures without compromising performance.
    \item \textbf{Sustainable and Cost-Effective Training:} As demonstrated in our empirical results, decoupling functional redundancy from expert synergy resolves gradient conflicts and substantially expedites model convergence. This algorithmic efficiency translates directly to significant reductions in computing overhead and overall energy consumption, rendering the training of large-scale MoE models highly resource-efficient and economically viable.
\end{itemize}

\end{document}